\documentclass[11pt]{article}

\usepackage[numbers,sort&compress]{natbib}

\usepackage[utf8]{inputenc} %
\usepackage[T1]{fontenc}    %
\usepackage{hyperref}       %
\usepackage{url}            %
\usepackage{booktabs}       %
\usepackage{amsfonts}       %
\usepackage{nicefrac}       %
\usepackage{microtype}      %
\usepackage{outlines}
\usepackage[utf8]{inputenc}
\usepackage{amsmath, latexsym}
\usepackage{amscd}
\usepackage{amsbsy}
\usepackage{amssymb}
\usepackage{amsthm}
\usepackage{todonotes}
\usepackage{comment} 
\usepackage{fullpage} 
\usepackage{hyperref}
\usepackage{nicefrac}
\usepackage{shortcuts}
\newcommand{\pr}{\mathbb{P}}
\usepackage{amsthm,amsmath,amssymb}
\theoremstyle{plain}

\newtheorem{proposition}{Proposition}

\theoremstyle{definition}

\newtheorem{definition}{Definition}

\usepackage{mathrsfs}
\usepackage{mathtools}

\usepackage{subcaption}
\usepackage{enumitem}
\usepackage{hyperref}

\usepackage{multirow}

\title{The Fairness of Risk Scores Beyond Classification:\\Bipartite Ranking and the xAUC Metric}

\author{%
Nathan Kallus\\
Cornell University\\
\texttt{kallus@cornell.edu}
\and
Angela Zhou\\
Cornell University\\
\texttt{az434@cornell.edu}
}
\date{}

\begin{document}

\maketitle

\begin{abstract}
	Where machine-learned predictive risk scores inform high-stakes decisions, such as bail and sentencing in criminal justice, fairness has been a serious concern. 
	Recent work has characterized the disparate impact that such risk scores can have when used for a binary classification task. 
	This may not account, however, for the more diverse downstream uses of risk scores and their non-binary nature.
	To better account for this, in this paper, we investigate the fairness of predictive risk scores from the point of view of a bipartite ranking task, where one seeks to rank positive examples higher than negative ones.
	We introduce the xAUC disparity as a metric to assess the disparate impact of risk scores and define it as the difference in the probabilities of ranking a random positive example from one protected group above a negative one from another group and vice versa.
	We provide a decomposition of bipartite ranking loss into components that involve the discrepancy and components that involve pure predictive ability within each group.
	We use xAUC analysis to audit predictive risk scores for recidivism prediction, income prediction, and cardiac arrest prediction, where it describes disparities that are not evident from simply comparing within-group predictive performance.
	\end{abstract}

\section{Introduction}

Predictive risk scores support decision-making in high-stakes settings such as bail sentencing in the criminal justice system, triage and preventive care in healthcare, and lending decisions in the credit industry \citep{healthequity-18, propublica}. In these areas where predictive errors can significantly impact individuals involved, studies of fairness in machine learning have analyzed the possible disparate impact introduced by predictive risk scores primarily in a \textit{binary classification setting}: 
if predictions determine whether or not someone is detained pre-trial, is admitted into critical care, or is extended a loan. %
But 
the ``human in the loop'' with risk assessment tools often has recourse to make decisions about extent, intensity, or prioritization of resources. That is, in practice, 
predictive risk scores are used to provide informative rank-orderings of individuals with binary outcomes in the following settings: 
\begin{enumerate}[label=(\arabic*)]
	\item In criminal justice, the ``risk-needs-responsivity'' model emphasizes 
	matching the level of social service interventions to the specific individual's risk of re-offending \cite{rnr-07,bdivz17}.  
	\item In healthcare and other clinical decision-making settings, risk scores are used as decision aids for prevention of chronic disease or triage of health resources,
	where a variety of interventional resource intensities are available; however, the prediction quality of individual conditional probability estimates can be poor \cite{med-prediction-rules-16,med-meaningful-use-2011,healthequity-18,chan-ez-18}. 
	\item In credit, predictions of default risk affect not only loan acceptance/rejection decisions, but also risk-based setting of interest rates. \citet{pgp-fuster-18} embed machine-learned credit scores in an economic pricing model which
	suggests negative economic welfare impacts on Black and Hispanic borrowers. 
	\item In municipal services, predictive analytics tools have been used to direct resources for maintenance, repair, or inspection by prioritizing or bipartite ranking by risk of failure or contamination \cite{rudin-10-manholeprediction,abernethy-flint-17}. Proposals to use new data sources such as 311 data, which incur the self-selection bias of citizen complaints, may introduce inequities in resource allocation \cite{kontokosta-hong-18}.
\end{enumerate} 
We describe how the problem of \textit{bipartite ranking}, that of finding a good ranking function that ranks positively labeled examples above negative examples, better encapsulates how predictive risk scores are used in practice to rank individual units, and how a new metric we propose, $\op{xAUC}$, can assess ranking disparities.

Most previous work on fairness in machine learning has emphasized disparate impact in terms of 
confusion matrix metrics such as true positive rates and false positive rates and other desiderata, such as probability calibration of risk scores. Due in part to inherent trade-offs between these performance criteria, some have recommended to retain \textit{unadjusted} risk scores that achieve good calibration, rather than adjusting for parity across groups, in order to retain as much information as possible and allow human experts to make the final decision \citep{cpv18,chen-js18, holstein19, cdg-18}. At the same time, 
group-level discrepancies in the \textit{prediction loss} of risk scores, relative to the true Bayes-optimal score, are not observable, since only binary outcomes are observed. 
In particular, our bipartite ranking-based perspective reconciles a gap between %
the differing arguments made by ProPublica and Equivant (then Northpointe) regarding the potential bias or disparate impact of the COMPAS recidivism tool. Equivant levies within-group AUC parity (``accuracy equity'') (among other desiderata such as calibration and predictive parity) to claim fairness of the risk scores in response to ProPublica's allegations of bias due to true positive rate/false positive rate disparities for the Low/Not Low risk labels \citep{propublica,dieterich16}. %
Our $\op{xAUC}$ metric, which measures the probability of positive-instance members of one group being misranked below negative-instance members of another group, and vice-versa, highlights that within-group comparison of $\op{AUC}$ discrepancies does not summarize accuracy inequity. %
We illustrate this in Fig.~\ref{fig-compas} for a risk score learned from COMPAS data: $\op{xAUC}$ disparities reflect disparate misranking risk faced by positive-label individual of either class.
\begin{figure*}[t!]
	\begin{subfigure}[b]{0.5\linewidth}%
		\includegraphics[width=0.5\linewidth]{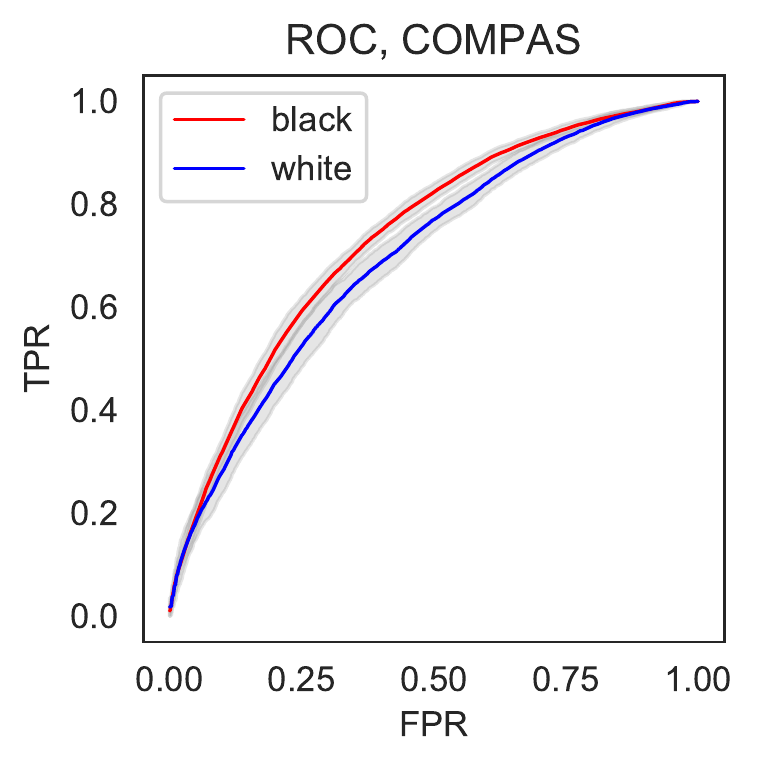}\includegraphics[width=0.5\linewidth]{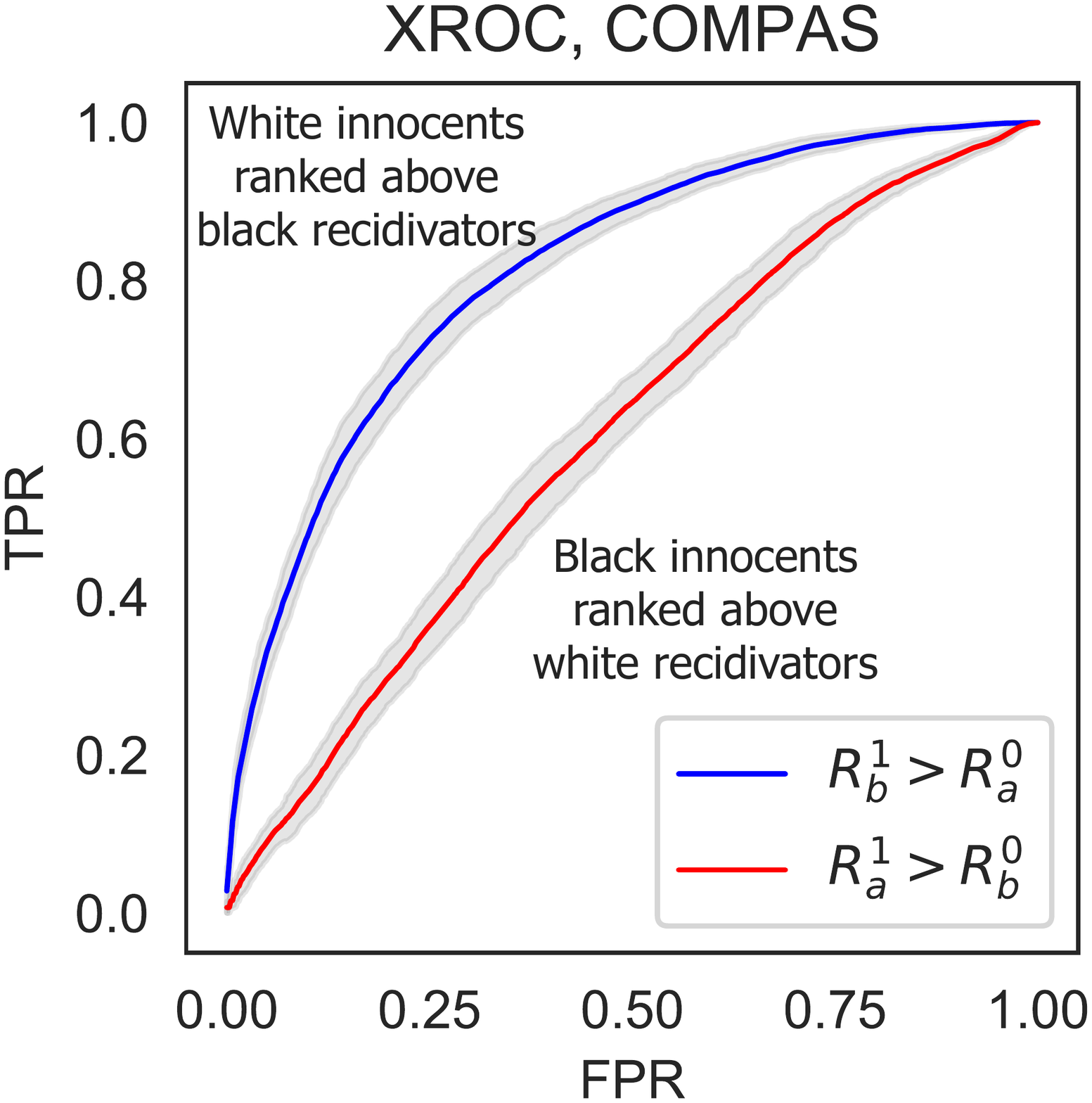}%
		\subcaption{ROC curves and XROC curves for COMPAS}\label{figex1-xrocs}%
	\end{subfigure}\begin{subfigure}[b]{0.5\linewidth}%
		\includegraphics[width=\linewidth]{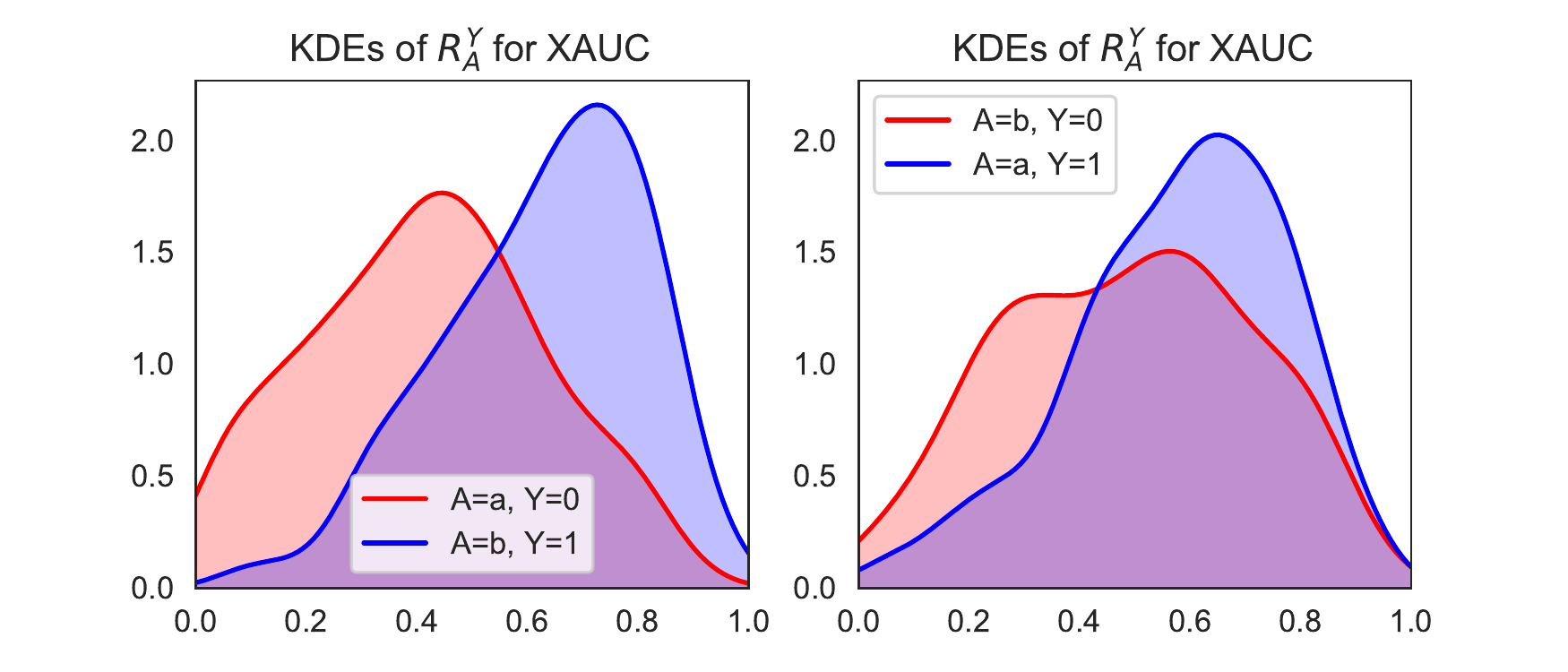}%
		\subcaption{Score distributions $\Pr[R=r\mid A=a, Y=y]$}\label{figex1-scoredists}%
	\end{subfigure}%
	\caption{Analysis of xAUC disparities for the COMPAS Violent Recidivism Prediction dataset}
	\label{fig-compas}
\end{figure*}

In this paper, we 
propose and study the cross-ROC curve and the corresponding $\op{xAUC}$ metric for auditing 
disparities induced by a predictive risk score, as they are used in broader contexts to inform resource allocation. We relate the $\op{xAUC}$ metric to different group- and outcome-based decompositions of a \textit{bipartite ranking} loss, 
and assess the resulting metrics on datasets where fairness has been of concern. %
\section{Related Work}

Our analysis of fairness properties of risk scores in this work is most closely related to the study of  ``disparate impact'' in machine learning, which focuses on disparities in the \textit{outcomes} of a process across protected classes, without racial animus \citep{bs14}. Many previous approaches have considered formalizations via error rate metrics of the confusion matrix in a binary classification setting \citep{feldman15,zafar17,hardt2016equality,barocas-hardt-narayanan}. %
By now, a panoply of fairness metrics have been studied for binary classification in order to assess group-level disparities in confusion matrix-based metrics. %
Proposals for error rate balance assess or try to equalize true positive rates and/or false positive rates, error rates measured conditional on the \textit{true} outcome, emphasizing the equitable treatment of those who actually are of the outcome type of interest \cite{hardt2016equality,zafar17}. Alternatively, one might assess the negative/positive predictive value (NPV/PPV) error rates conditional on the thresholded \textit{model prediction} \cite{c16}. 

The predominant criterion used for assessing fairness of \textit{risk scores}, outside of a binary classification setting, is that of calibration. Group-wise calibration requires that $\Pr[Y=1 \mid R = r, A = a] = \Pr[Y=1 \mid R = r, A = b]  = r,$ as in \cite{c16}. The impossibilities of satisfying notions of error rate balance and calibration simultaneously have been discussed in \citep{kmr17,c16}. \citet{calib-lsh-18} show that group calibration is a byproduct of unconstrained empirical risk minimization, and therefore is not a restrictive notion of fairness. 
\citet{hebert-johnson-multicalibration-18} note the critique that group calibration does not restrict the \textit{variance} of a risk score as an unbiased estimator of the Bayes-optimal score.

Other work has considered fairness in ranking settings specifically, with particular attention to applications in information retrieval, such as questions of fair representation in search engine results. \citet{ys17} 
assess statistical parity at discrete cut-points of a ranking, incorporating position bias inspired by normalized discounted cumulative gain (nDCG) metrics. \citet{celis17} consider the question of fairness in rankings, where fairness is considered as constraints on diversity of group membership in the top $k$ rankings, for any choice of $k$. \citet{sj18s} consider fairness of exposure in rankings under known relevance scores and propose an algorithmic framework that produces probabilistic rankings satisfying fairness constraints in expectation on exposure, under a position bias model. We focus instead on the bipartite ranking setting, where the area under the curve (AUC) loss emphasizes ranking quality on the entire distribution, whereas other ranking metrics such as nDCG or top-k metrics emphasize only a portion of the distribution.

The problem of bipartite ranking is related to, but distinct from, binary classification \cite{fiss03,mohri-12,agarwal-roth-05}; see \cite{menon-williamson18,cm03} for more information. While the bipartite ranking induced by the Bayes-optimal score is analogously Bayes-risk optimal for bipartite ranking (\eg, \cite{menon-williamson18}), in general, a probability-calibrated classifier is not optimizing for the bipartite ranking loss. \citet{cm03} observe that AUC may vary widely for the same error rate, and that algorithms designed to globally optimize the AUC perform better than optimizing surrogates of the AUC or error rate. \citet{ranking-regret-na-13} study transfer regret bounds between the related problems of binary classification, bipartite ranking, and outcome-probability estimation. 

\section{Problem Setup and Notation}
We suppose we have data $(X,A,Y)$ on features $X \in \mathcal{X}$, sensitive attribute $A\in\mathcal A$, and binary labeled outcome $Y\in\{0,1\}$. 
We are interested in assessing the downstream impacts of a predictive risk score $R: \mathcal{X}\times \mathcal{A} \to \mathbb{R}$, which may or may not access the sensitive attribute. When these risk scores 
represent an estimated conditional probability of positive label,
$R: \mathcal{X}\times \mathcal{A}\to [0,1]$. 
For brevity, we also let $R=R(X,A)$ be the random variable corresponding to an individual's risk score.
We generally use the conventions that $Y=1$ is associated with opportunity or benefit for the individual (\eg, freedom from suspicion of recidivism, creditworthiness) and that when discussing two groups, $A=a$ and $A=b$, the group $A=a$ might be a historically disadvantaged group. 

Let the conditional cumulative distribution function of the learned score $R$ evaluated at a threshold $\theta$ given label and attribute be denoted by $$\textstyle F_y^a(\theta) = \Pr[R\leq \theta \mid Y = y, A = a].$$
We let $G_y^a=1-F_y^a$ denote the complement of $F_y^a$.
We drop the $a$ subscript to refer to the whole population: $F_y(\theta) = \Pr[R\leq \theta \mid Y = y]$.
Thresholding the score yields a binary classifier, $\hat Y_{\theta}=\indic{R\geq \theta}$.
The classifier's true negative rate (TNR) is $F_0(\theta)$, its false positive rate (FPR) is $G_0(\theta)$, its false negative rate (FNR) is $F_1(\theta)$, and its true positive rate (TPR) is $G_1(\theta)$.
Given a risk score, the choice of optimal threshold for a binary classifier depends on the differing costs of false positive and false negatives.
We might expect cost ratios of false positives and false negatives to differ if we consider the use of risk scores to direct punitive measures or to direct interventional resources.

In the setting of bipartite ranking, the data comprises of a pool of positive labeled examples, $S_+ = \{ X_i\}_{i \in [m]}$, drawn i.i.d. according to a distribution $X_+ \sim D_+$, and negative labeled examples $S_- = \{ X_i'\}_{i \in [n]}$ drawn according to a distribution $X_- \sim D_-$ \citep{mohri-12}. 
The rank order may be determined by a score function $s(X)$, which achieves empirical bipartite ranking error $\frac{1}{mn}\sum_{i=1}^m \sum_{j=1}^n \mathbb{I}[s(X_i)< s(X_j')]$. The area under the receiver operating characteristic (ROC) curve (AUC), a common (reward) objective for bipartite ranking is often used %
as a metric describing the quality of a predictive score, independently of the final threshold used to implement a classifier, and is invariant to different base rates of the outcomes. The ROC curve plots $G^0(\theta)$ on the x-axis with $G^1(\theta)$ on the y-axis as we vary $\theta$ over the space of various decision thresholds. The AUC is the area under the ROC curve, \ie,
$$\textstyle \mathrm{AUC}= \int_0^1 G_1( G_0^{-1}(v)) dv$$ An AUC of $\frac{1}{2}$ corresponds to a completely random classifier; therefore, the difference from $\frac{1}{2}$ serves as a metric for the diagnostic quality of a predictive score. 
We recall the probabilistic interpretation of AUC that it is the probability that a randomly drawn example from the positive class is correctly ranked by the score $R$ above a randomly drawn score from the negative class \citep{hm82}. Let $R_1$ be drawn from $R\mid Y=1$ and $R_0$ be drawn from $R\mid Y=0$ independently. Then $ \mathrm{AUC}  = 
	\Pr[R_1 > R_0]
	$.

\section{The Cross-ROC (xROC) and Cross-Area Under the Curve (xAUC)}
We introduce the cross-ROC curve and the cross-area under the curve metric $\op{xAUC}$ that summarize group-level disparities in \textit{misranking} errors induced by a score function $R(X,A)$.
\begin{definition}[Cross-Receiver Operating Characteristic curve (xROC)]
	\begin{align*}\textstyle \op{xROC}(\theta; {R},a,b)= ( \Pr[R>\theta\mid A =b, Y =0 ], \Pr[R>\theta\mid A =a, Y =1 ] )%
	\end{align*}
\end{definition}
The $\op{xROC}^{a,b}$ curve parametrically plots $\op{xROC}(\theta; {R},a,b)$ over the space of thresholds $\theta \in \mathbb{R}$, generating the curve of TPR of group $a$ on the y-axis vs. the FPR of group $b$ on the x-axis. We define the $	\op{xAUC}(a,b)$ metric as the area under the $\op{xROC}^{a,b}$ curve. Analogous to the usual $\op{AUC}$, we provide a probabilistic interpretation of the $\op{xAUC}$ metric as the probability of correctly ranking a positive instance of group $a$ above a negative instance of group $a$ under the corresponding outcome- and class-conditional distributions of the score.
\begin{definition}[$\op{xAUC}$]
	\begin{align*}
	\textstyle\op{xAUC}(a,b) &=  \int_{0}^1 G_1^a( (G_0^b)^{-1}(v) )  dv = 	\Pr[R_1^a > R_0^b]
	\end{align*}
\end{definition}
where $R_1^a$ is drawn from $R\mid Y=1,A=a$ and $R_0^b$ is drawn from $R\mid Y=0,A=b$ independently. For brevity, henceforth, $R_y^a$ is taken to be drawn from $R\mid Y=y,A=a$ and independently of any other such variable. We also drop the superscript to denote omitting the conditioning on sensitive attribute (\eg, $R_y$).

The $\op{xAUC}$ accuracy metrics for a binary sensitive attribute measure the probability that a randomly chosen unit from the ``positive'' group $Y=1$ in group $a$, is ranked higher than a randomly chosen unit from the  ``negative" group, $Y=0$ in group $b$, under the corresponding group- and outcome-conditional distributions of scores $R_a^y$. We let $\op{AUC}^a $ denote the \textit{within-group} AUC for group $A =a$, $\Pr[R_1^a > R_0^a]$. 

If the difference between these metrics, the $\op{xAUC}$ disparity 
\[\textstyle\Delta{\op{xAUC}} = \Pr[R_1^a > R_0^b] - \Pr[R_1^b > R_0^a]
= \Pr[R_1^b \leq R_0^a] - \Pr[R_1^a \leq R_0^b]\]
is substantial and positive, then we might consider group $b$ to be systematically ``disadvantaged'' and $a$ to be ``advantaged'' when $Y=0$ is a negative or harmful label or is associated with punitive measures, as in the recidivism predication case.
Conversely, we have the opposite interpretation if $Y=0$ is a positive label associated with greater beneficial resources. 
Similarly, since $\Delta{\op{xAUC}}$ is anti-symmetric in $a,b$, negative values are also interpreted in the converse.

When higher scores are associated with opportunity or additional benefits and resources, as in the recidivism predication case, a positive $\Delta{\op{xAUC}}$ means group $a$ either gains by correctly having its deserving members
correctly ranked above the non-deserving members of group $b$ and/or by having its non-deserving members incorrectly ranked above the deserving members of group $b$; and symmetrically, group $b$ loses in the same way. The magnitude of the disparity $\Delta{\op{xAUC}}$ describes the misranking disparities incurred under this predictive score, while the magnitude of the $\op{xAUC}$ measures the particular across-subgroup rank-accuracies.

Computing the $\op{xAUC}$ is simple: one simply computes the sample statistic, \break$\textstyle \frac{1}{n_0^b n_1^a} \sum_{i:\;\substack{A_i = a,\\Y_i = 1} } \sum_{j:\;\substack{A_i = b,\\Y_i = 0} }\mathbb{I}[ R(X_i) > R(X_j) ] $. Algorithmic routines for computing the AUC quickly by a sorting routine can be directly used to compute the $\op{xAUC}$s. Asymptotically exact confidence intervals are available, as shown in \citet{delong88}, using the generalized U-statistic property of this estimator.

\begin{figure}[t!]
\begin{subfigure}[b]{0.5\textwidth}%
\includegraphics[width=0.5\textwidth]{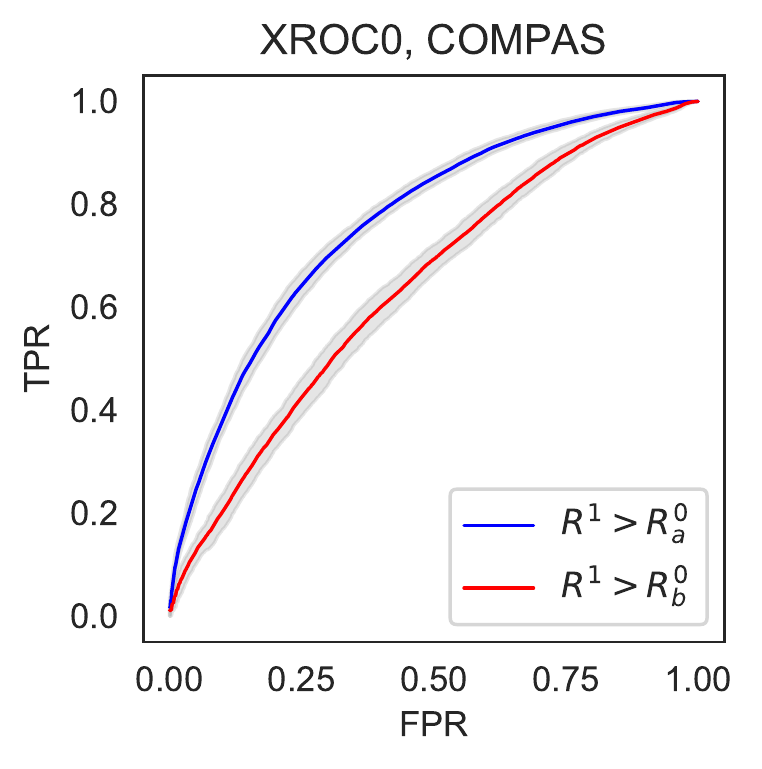}%
\includegraphics[width=0.5\textwidth]{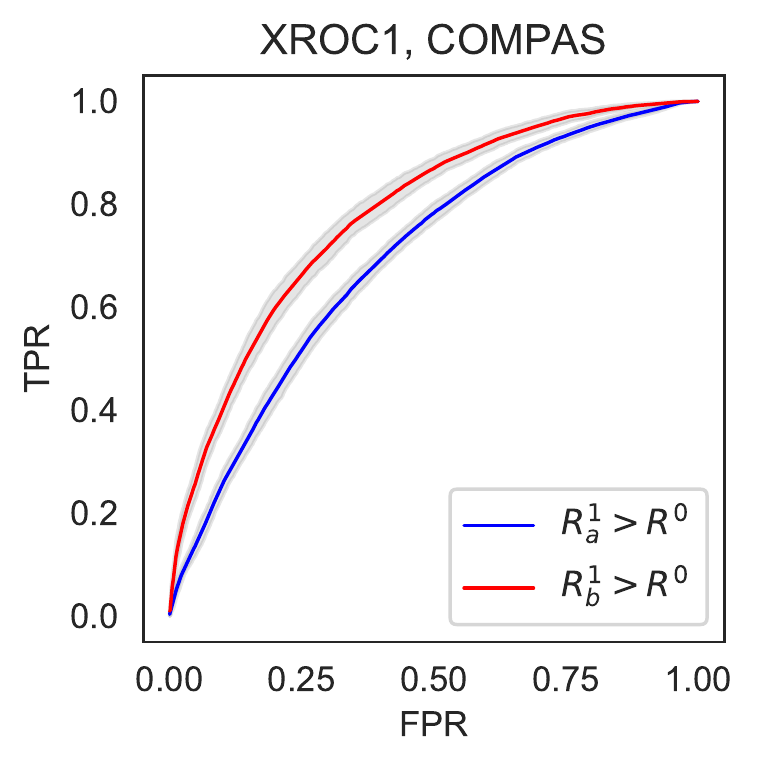}%
\caption{COMPAS: $a=\text{Black},b=\text{White},0=\text{Recidivate}$}%
\end{subfigure}%
\begin{subfigure}[b]{0.5\textwidth}%
\includegraphics[width=0.5\textwidth]{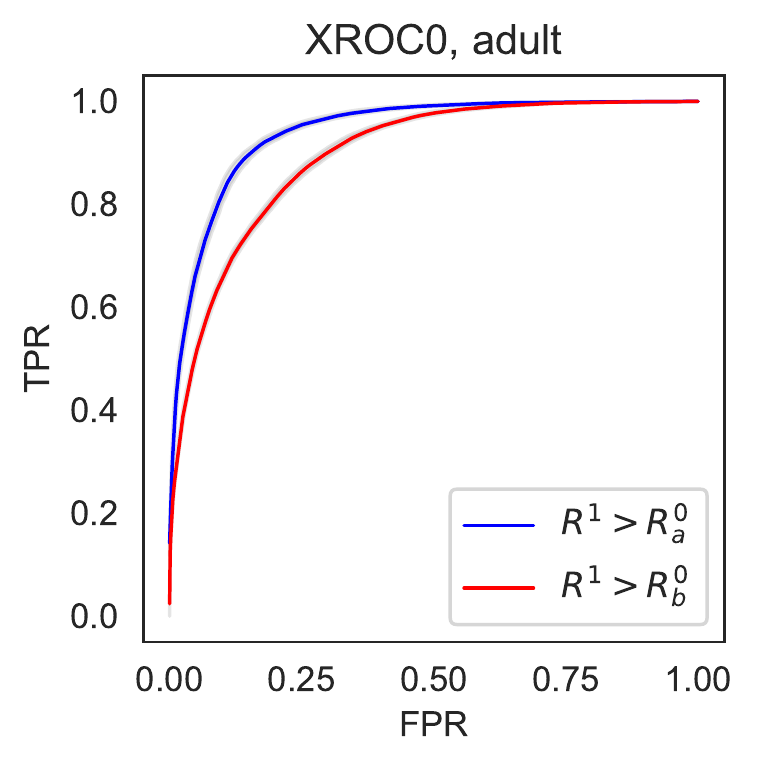}%
\includegraphics[width=0.5\textwidth]{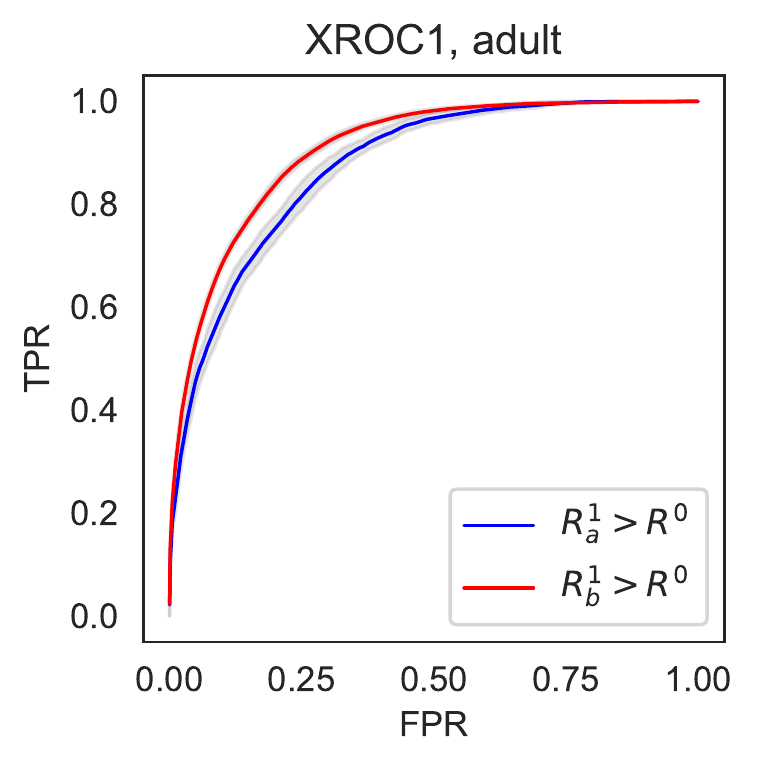}%
\caption{Adult: $a=\text{Black},b=\text{White},0=\text{Low income}$}%
\end{subfigure}%
\caption{Balanced xROC curves for COMPAS and Adult datasets}\label{fig:balanced-xroc}%
\end{figure}
\paragraph{Variants of the xAUC metric.} 

We can decompose AUC differently and assess different variants of the $\op{xAUC}$: 

\begin{definition}[Balanced xAUC]\begin{align*}
	\op{xAUC_0(a)} = \Pr [R_1 > R_0^a], \; \op{xAUC^0(b)} = \Pr[ R_1 > R_0^b]\\
	\op{xAUC^1(a)} = \Pr[R_1^a > R_0], \; \op{xAUC^1(b)} = \Pr[ R_1^b > R_0 ]
	\end{align*}\end{definition}These xAUC disparities compare misranking error faced by individuals from either group, conditional on a specific outcome: $	\op{xAUC_0(a)} - \op{xAUC^0(b)}$ compares the ranking accuracy faced by those of the negative class $Y=0$ across groups, and $\op{xAUC^1(a)} - \op{xAUC^1(b)}$ analogously compares those of the positive class $Y=1$. The following proposition shows how the population AUC decomposes as weighted combinations of the $\op{xAUC}$ and within-class $\op{AUC}$s, or the balanced decompositions $\op{xAUC}_1$ or $\op{xAUC}_0$, weighted by the outcome-conditional class probabilities.
\begin{proposition}[xAUC metrics as decompositions of AUC]\label{prop-aucdecomp}
	\begin{align*}
	\textstyle&\op{AUC} =	\Pr[R_1 > R_0 ]= \sum_{b' \in \mathcal{A}} \Pr[A=b'\mid Y=0]  \cdot  \sum_{a' \in \mathcal{A}}\Pr[A=a'\mid Y=1] \Pr[R_1^{a'} > R_0^{b'}] \\
\textstyle	& = \sum_{a' \in \mathcal{A}} \Pr[A=a'\mid Y=1] \Pr[R_1^{a'} > R_0]= \sum_{a' \in \mathcal{A}} \Pr[A=a'\mid Y=0] \Pr[R_1 > R_0^{a'}]
	\end{align*}
\end{proposition}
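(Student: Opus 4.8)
The plan is to prove all three identities by the law of total probability, conditioning on the group membership of the positive and negative draws and exploiting the independence of $R_1$ and $R_0$. Recall from the probabilistic interpretation of AUC recorded above that $\mathrm{AUC} = \Pr[R_1 > R_0]$, where $R_1 \sim R \mid Y = 1$ and $R_0 \sim R \mid Y = 0$ are drawn independently.

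First I would establish the full double decomposition. The key observation is that drawing $R_1$ from $R \mid Y = 1$ is equivalent to first drawing a group label $a'$ from the distribution $\Pr[A = a' \mid Y = 1]$ and then drawing the score from $R \mid Y = 1, A = a'$, i.e.\ from $R_1^{a'}$; symmetrically, drawing $R_0$ amounts to drawing $b'$ from $\Pr[A = b' \mid Y = 0]$ and then $R_0^{b'}$. Since $R_1$ and $R_0$ are independent, the pair of latent group labels $(A_{\mathrm{pos}}, A_{\mathrm{neg}})$ of the two draws is jointly distributed as the product of these two conditional laws. Conditioning the event $\{R_1 > R_0\}$ on these labels and applying the law of total probability gives
\[
\Pr[R_1 > R_0] = \sum_{a' \in \mathcal{A}} \sum_{b' \in \mathcal{A}} \Pr[A = a' \mid Y = 1]\,\Pr[A = b' \mid Y = 0]\,\Pr[R_1 > R_0 \mid A_{\mathrm{pos}} = a',\, A_{\mathrm{neg}} = b'].
\]
By the independence-and-reduction just described, the conditional event collapses to $\Pr[R_1^{a'} > R_0^{b'}] = \op{xAUC}(a',b')$, and reordering the two sums yields exactly the stated expression.

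For the two balanced decompositions I would condition on only one of the two group labels. Conditioning solely on the group of the positive draw gives $\Pr[R_1 > R_0] = \sum_{a'} \Pr[A = a' \mid Y = 1]\,\Pr[R_1^{a'} > R_0]$, where now $R_0$ retains its full outcome-conditional distribution $R \mid Y = 0$ with no conditioning on group; each term is precisely the balanced quantity $\Pr[R_1^{a'} > R_0]$. Symmetrically, conditioning only on the group of the negative draw gives $\Pr[R_1 > R_0] = \sum_{b'} \Pr[A = b' \mid Y = 0]\,\Pr[R_1 > R_0^{b'}]$, recovering the $\op{xAUC}_0$ form.

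Since every step is a single application of the law of total probability, there is no genuine analytic obstacle; the only point requiring care is the independence bookkeeping, namely ensuring that conditioning on the group of the positive example leaves the distribution of the negative draw unchanged, which is exactly the independence of $R_1$ and $R_0$ assumed in the probabilistic interpretation of AUC. Ties in the scores (atoms in the conditional distributions) are immaterial here, since we are merely partitioning a single fixed event $\{R_1 > R_0\}$ rather than manipulating its boundary.
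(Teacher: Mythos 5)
Your proof is correct and uses essentially the same idea as the paper's: a law-of-total-probability decomposition of the mixture distributions $R\mid Y=1$ and $R\mid Y=0$ over group membership, combined with independence of the two draws. The paper writes this out concretely as an integral over the score density (marginalizing $\Pr[R>r, A=a'\mid Y=1]$ back to $\Pr[R>r\mid Y=1]$) and proves one identity, noting the others follow identically, whereas you phrase it via latent group labels and handle all three identities explicitly --- a purely presentational difference.
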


\begin{table*}[t!]
	\centering
	\caption{Ranking error metrics ($\op{AUC}$, $\op{xAUC}$, Brier scores for calibration) for different datasets.
		(We include standard errors in Table~\ref{table-stderrs} of the appendix.) } \label{table-metrics}
	\begin{tabular}{lllllllllll}\toprule
		&		&\multicolumn{2}{c}{COMPAS}&\multicolumn{2}{c}{Framingham}&
		\multicolumn{2}{c}{German}&\multicolumn{2}{c}{Adult}\\
		&$A=$	& Black & White & Non-F. & Female & $<25$ & $\geq25$& Black & White\\
		\midrule \multirow{2}{*}{\rotatebox[origin=c]{90}{Logistic Reg.}}&$\op{AUC}$ & $0.737$ & $0.701$ & $0.768$ & $0.768$ & $0.726$ & $0.788$ & $0.923$ & $0.898$\\
		&Brier & $0.208$ & $0.21$ & $0.201$ & $0.166$ & $0.211$ & $0.158$ & $0.075$ & $0.111$\\
		&$\op{XAUC}$ & $0.604$ & $0.813$ & $0.795$ & $0.737$ & $0.708$ & $0.802$ & $0.865$ & $0.944$\\
		&$\op{XAUC}^1$ & $0.698$ & $0.781$ & $0.785$ & $0.756$ & $0.712$ & $0.791$ & $0.874$ & $0.905$\\
		&$\op{XAUC}^0$ & $0.766$ & $0.641$ & $0.755$ & $0.783$ & $0.79$ & $0.775$ & $0.943$ & $0.895$\\
		\midrule \multirow{2}{*}{\rotatebox[origin=c]{90}{RankBoost cal.}}&$\op{AUC}$ & $0.745$ & $0.703$ & $0.789$ & $0.797$ & $0.704$ & $0.796$ & $0.924$ & $0.899$\\
		&Brier & $0.206$ & $0.21$ & $0.182$ & $0.15$ & $0.22$ & $0.158$ & $0.074$ & $0.109$\\
		&$\op{XAUC}$ & $0.599$ & $0.827$ & $0.822$ & $0.761$ & $0.714$ & $0.788$ & $0.875$ & $0.941$\\
		&$\op{XAUC}^1$ & $0.702$ & $0.79$ & $0.809$ & $0.783$ & $0.711$ & $0.793$ & $0.882$ & $0.906$\\
		&$\op{XAUC}^0$ & $0.776$ & $0.638$ & $0.777$ & $0.811$ & $0.774$ & $0.783$ & $0.939$ & $0.897$\\[0.5em]
		\bottomrule
	\end{tabular}
\end{table*}

\section{Assessing xAUC}\label{sec-evaluation}
\subsection{COMPAS Example}
In Fig.~\ref{fig-compas}, we revisit the COMPAS data and assess our $\op{xROC}$ and $\op{xAUC}$ curves to illustrate ranking disparities that may be induced by risk scores learned from this data. The COMPAS dataset is of size $n=6167, p=402$, where sensitive attribute is race, with $A=a, b$ for black and white, respectively. We define the outcome $Y=1$ for non-recidivism within 2 years and $Y=0$ for violent recidivism. Covariates include information on number of prior arrests and age; we follow the pre-processing of \citet{friedler19}.%

We first train a logistic regression model on the original covariate data (we do not use the decile scores directly in order to do a more fine-grained analysis), using a 70\%, 30\% train-test split and evaluating metrics on the out-of-sample test set. In Table~\ref{table-metrics}, we report the group-level AUC and the \citet{brier50} scores (summarizing calibration), and our $\op{xAUC}$  metrics. The xAUC for column $A=a$ is $\op{xAUC}(a,b)$, for column $A=b$ it is $\op{xAUC}(b,a)$, and for column $A=a$, $\op{xAUC}^y$ is $\op{xAUC}^y(a)$. The Brier score for a probabilistic prediction of a binary outcome is $\frac{1}{n}\sum_i^n ( R(X_i) - Y_i)^2$. The score is overall well-calibrated (as well as calibrated by group), consistent with analyses elsewhere \citep{c16,dieterich16}. 

We also report the metrics from using a bipartite ranking algorithm, Bipartite Rankboost of \citet{fiss03} and calibrating the resulting ranking score by Platt Scaling, displaying the results as ``RankBoost cal.'' We observe essentially similar performance across these metrics, suggesting that the behavior of $\op{xAUC}$ disparities is independent of model specification or complexity; and that methods which directly optimize the population $\op{AUC}$ error may still incur these group-level error disparities. 

In Fig.~\ref{figex1-xrocs}, we plot ROC curves and our $\op{xROC}$ curves, displaying the averaged ROC curve (interpolated to a fine grid of FPR values) over 50 sampled train-test splits, with 1 standard error bar shaded in gray (computed by the method of \cite{delong88}). We include standard errors for $\op{xAUC}$ metrics in Table~\ref{table-stderrs} of the appendix. While a simple \textit{within-group} AUC comparison suggests that the score is overall more accurate for blacks -- in fact, the AUC is slightly higher for the black population with $\op{AUC}^a=0.737$ and $\op{AUC}^b=0.701$ -- computing our xROC curve and $\op{xAUC}$ metric shows that blacks would be disadvantaged by misranking errors. The cross-group accuracy $\op{xAUC}(a, b)= 0.604$ is significantly lower than $\op{xAUC}(b,a)=0.813$: black innocents are nearly indistinguishable from actually guilty whites. This $\Delta \op{xAUC}$ gap of $-0.21$ is precisely the \textbf{cross-group accuracy inequity} that simply comparing \textit{within-group AUC} does not capture. When we plot kernel density estimates of the score distributions in Fig.~\ref{figex1-scoredists} from a representative training-test split, we see that indeed the distribution of scores for black innocents $\Pr[R=r\mid A=a, Y=0]$ has significant overlap with the distribution of scores for white innocents.

\paragraph{Assessing balanced xROC.} In Fig.~\ref{fig:balanced-xroc}, we compare the $\op{xROC_0}(a), \op{xROC_0}(b)$ curves with the $\op{xROC_1}(a),\op{xROC_1}(b)$ curves for the COMPAS data. 
The relative magnitude of $\Delta\op{xAUC}_1$ and $\Delta\op{xAUC}_0$ provides insight on whether the burden of the $\op{xAUC}$ disparity falls on those who are innocent or guilty. Here, since the $\Delta\op{xAUC}_0$ disparity is larger in absolute terms, it seems that misranking errors result in inordinate benefit of the doubt in the errors of distinguishing risky whites ($Y=0$) from innocent individuals, rather than disparities arising from distinguishing innocent members of either group from generally guilty individuals.

\subsection{Assessing xAUC on Other Datasets}

\begin{figure}[t!]\centering
\begin{subfigure}[b]{0.3\textwidth}%
\includegraphics[width=\textwidth]{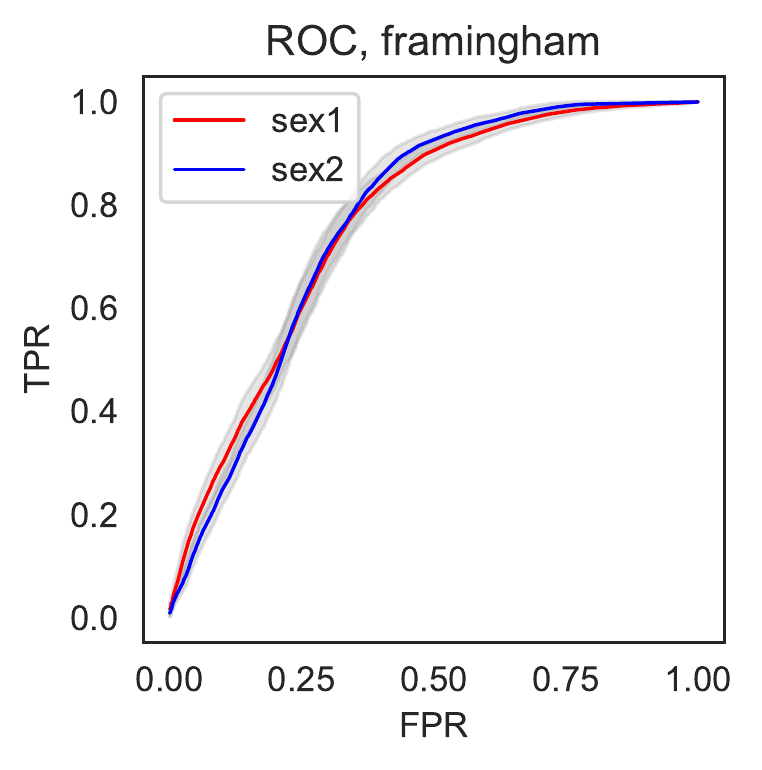}\\\includegraphics[width=\textwidth]{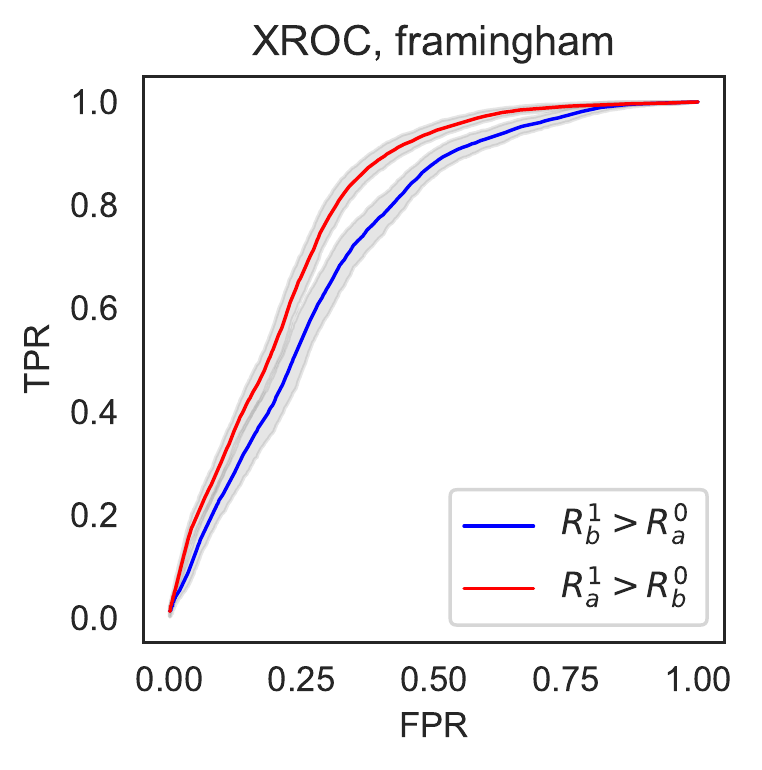}
\caption{Framingham: $a=\text{Non-F.}$, $b=\text{Female}$, $1=\text{CHD}$}%
\end{subfigure}\hspace{0.045\textwidth}%
\begin{subfigure}[b]{0.3\textwidth}%
\includegraphics[width=\textwidth]{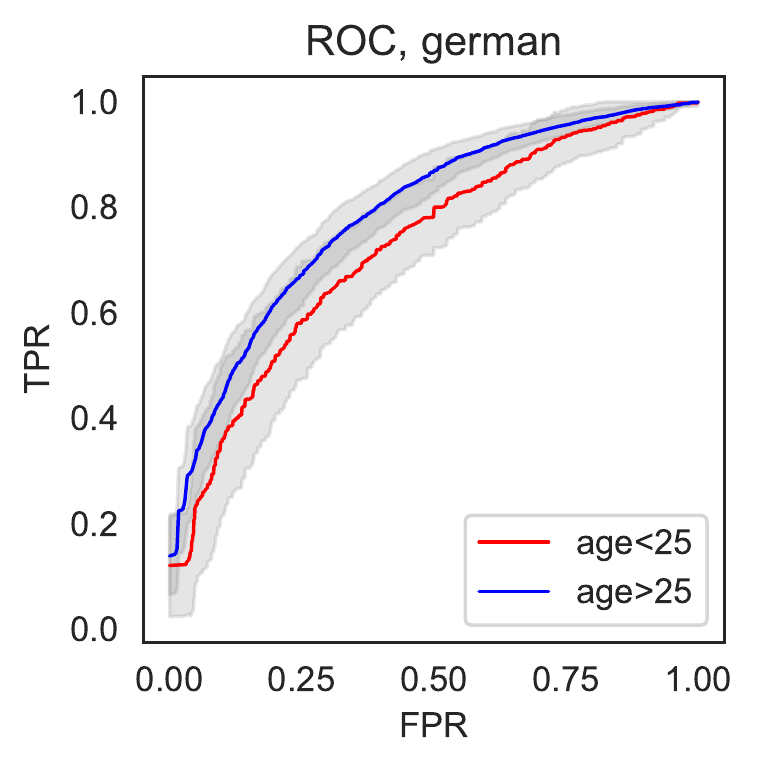}\\\includegraphics[width=\textwidth]{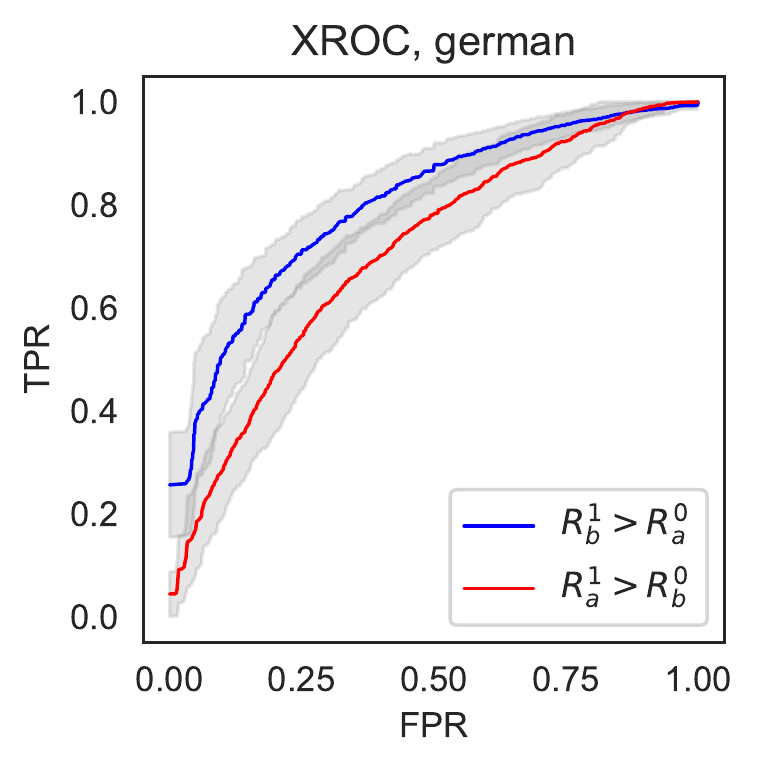}
\caption{German: $a={\text{Age}<25}$,\break$b={\text{Age}\geq25}$, $0=\text{Default}$}%
\end{subfigure}\hspace{0.045\textwidth}%
\begin{subfigure}[b]{0.3\textwidth}%
\includegraphics[width=\textwidth]{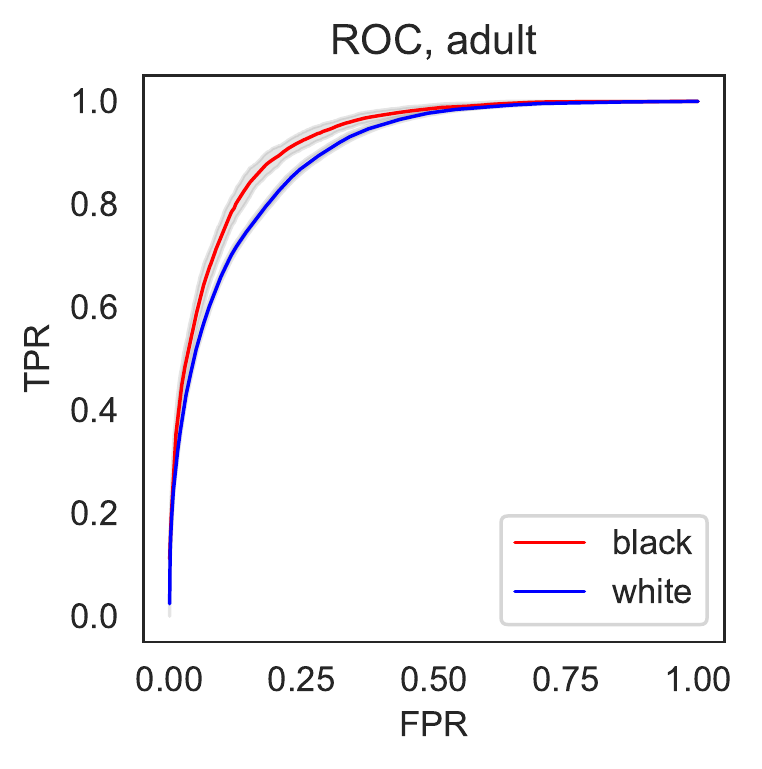}\\\includegraphics[width=\textwidth]{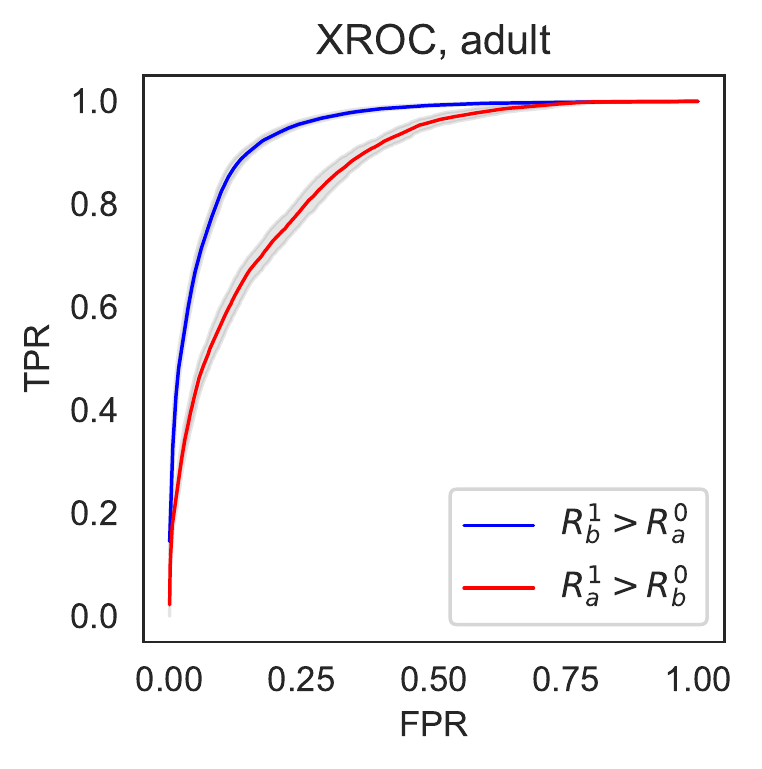}%
\caption{Adult: $a=\text{Black}$, $b=\text{White}$, $0=\text{Low income}$}%
\end{subfigure}%
\caption{ROC and xROC curve comparison for Framingham, German credit, and  Adult datasets. }\label{fig-otherdata}%
\end{figure}

Additionally in Fig.~\ref{fig-otherdata} and Table.~\ref{table-metrics}, we evaluate these metrics on multiple datasets where fairness may be of concern, including risk scores learnt on the Framingham study, the German credit dataset, and the Adult income prediction dataset (we use logistic regression as well as calibrated bipartite RankBoost) \citep{wilson1987coronary, UCI17}. 
For the Framingham dataset (cardiac arrest risk scores),
$n=4658,p=7$ with sensitive attribute of gender, $A=a$ for non-female and $A=b$ for female. $Y=1$ denotes 10-year coronary heart disease (CHD) incidence. Fairness considerations might arise if predictions of likelier mortality are associated with greater resources for preventive care or triage. The German credit dataset is of size $n=1000, p = 57$, where the sensitive attribute is age with $A=a,b$ for age $< 25$, age $> 25$. Creditworthiness (non-default) is denoted by $Y=1$, and default by $Y=0$. The ``Adult'' income dataset is of size $n=30162,p=98$, sensitive attribute, $A=a, b$ for black and white. We use the dichotomized outcome $Y=1$ for high income $>50$k, $Y=0$ for low income $<50$k.

Overall, Fig.~\ref{fig-otherdata} shows that these $\op{xAUC}$ disparities persist, though the disparities are largest for the COMPAS and the large Adult dataset. %
For the Adult dataset
this disparity could result in the misranking of poor whites above wealthy blacks; this could be interpreted as possibly inequitable withholding of economic opportunity from actually-high-income blacks. %
The additional datasets also display different phenomena regarding the score distributions and $\op{xROC}_0, \op{xROC}_1$ comparisons, which we include in Fig.~\ref{fig:apx-kdes} of the Appendix. 

\section{Properties of the xAUC metric and Discussion}
We proceed to characterize the $\op{xAUC}$ metric and its interpretations as a measure of cross-group ranking accuracy. 
Notably, the $\op{xROC}$ and $\op{xAUC}$ implicitly compare performances of thresholds that are the same for different levels of the sensitive attribute, a restriction which tends to hold in applications under legal constraints regulating disparate treatment.

Next we point out that
for a perfect classifier with $\op{AUC} = 1$, the $\op{xAUC}$ metrics are also 1. And, for a classifier that classifies completely at random achieving $\op{AUC}=0.5$, the $\op{xAUC}$s are also $0.5$. %

\paragraph{Impact of Score Distribution.}
To demonstrate how risk score distributions affects the xAUC, we consider an example where
we assume normally distributed risk scores within each group and outcome condition; we can then express the $\op{AUC}$ in closed form in terms of the cdf of the convolution of the score distributions. 
Let $R_y^a \sim N(\mu_{ay}, \sigma^2_{ay})$ be drawn independently. Then the $\op{xAUC}$ is closed-form: $\textstyle \op{xAUC}(a,b) =\Pr[R_1^a > R_0^b] = \Phi\left( \frac{\mu_{a1}-\mu_{b0}}{\sqrt{\sigma^2_{a1} + \sigma^2_{b0} } } \right)$. 
We may expect that $\mu_{a1}>\mu_{b0}$, in which case $\Pr[R_1^a > R_0^b] > 0.5$. For fixed mean difference $\mu_{a1}-\mu_{b0}$ between the $a$-guilty and $b$-innocent (\eg, in the COMPAS example), a decrease in either variance increases $\op{xAUC}(a,b)$. For fixed variances, an increase in the separation between $a$-guilty and $b$-innocent $\mu_{a1}-\mu_{b0}$ increases $\op{xAUC}(a,b)$. The xAUC discrepancy is similarly closed form: $\op{xAUC}(a,b) = \Phi\left( \frac{\mu_{a1}-\mu_{b0}}{\sqrt{\sigma^2_{a1} + \sigma^2_{b0} } } \right)-\Phi\left( \frac{\mu_{b1}-\mu_{a0}}{\sqrt{\sigma^2_{a0} + \sigma^2_{b1} } } \right)$. If all variances are equal, then we will have a positive disparity (\ie, in disfavor of $b$) if $\mu_{a1}-\mu_{b0}>\mu_{b1}-\mu_{a0}$ (and recall we generally expect both of these to be positive).
This occurs if the separation between the advantaged-guilty and disadvantaged-innocent is smaller than the separation between the disadvantaged-guilty and advantaged-innocent. Alternatively, it occurs if $\mu_{a1}+\mu_{a0}>\mu_{b1}+\mu_{b0}$ so the overall mean scores of the disadvantaged are lower. If they are in fact equal, $\mu_{a1}+\mu_{a0}=\mu_{b1}+\mu_{b0}$ and $\mu_{a1}-\mu_{b0}>0$, then we have a positive disparity whenever $\sigma^2_{a0}-\sigma^2_{a1} > \sigma^2_{b0}-\sigma^2_{b1}$, that is, when in the $b$ class the difference in precision for innocents vs guilty is smaller than in group $a$. That is, disparate precision leads to xAUC disparities even with equal mean scores.
In Appendix~\ref{apx-diff-xauc-same-aucs} we include a toy example to illustrate a setting where the within-group AUCs remain the same but the $\op{xAUC}$s diverge.

Note that the $\op{xAUC}$ metric compares probabilities of misranking errors \textit{conditional} on drawing instances from either $Y=0$ or $Y=1$ distribution. When base rates differ, interpreting this disparity as normatively problematic implicitly assumes \textit{equipoise} in that we want random individuals drawn with equal probability from the white innocent/black innocent populations to face similar misranking risks, not drawn from the population distribution of offending. 

\paragraph{Utility Allocation Interpretation.} 
When risk scores direct the expenditure of resources or benefits, we may interpret $\op{xAUC}$ disparities as informative of group-level downstream utility disparities, if we expect beneficial resource or utility prioritizations which are \textit{monotonic} in the score $R$. 
In particular, allowing for any monotonic allocation $u$, the $\op{xAUC}$ measures %
$\Pr[u(R_1^a) > u(R_0^b)]$. Disparities in this measure suggest greater probability of confusion in terms of less effective utility allocation between the positive and negative classes of different groups. This property can be summarized by the integral representation of the $\op{xAUC}$ disparities (\eg, as in \cite{menon-williamson18}) as differences between the \textit{average rank} of positive examples from one group above negative examples from another group: $\Delta{\op{xAUC}}=  \E_{R_1^a}\left[ F_0^b( R_1^a ) \right]  - \E_{R_1^b}\left[ F_0^a( R_1^b ) \right]$.

\begin{figure}
  \begin{subfigure}[b]{0.25\textwidth}
    \includegraphics[width=\textwidth]{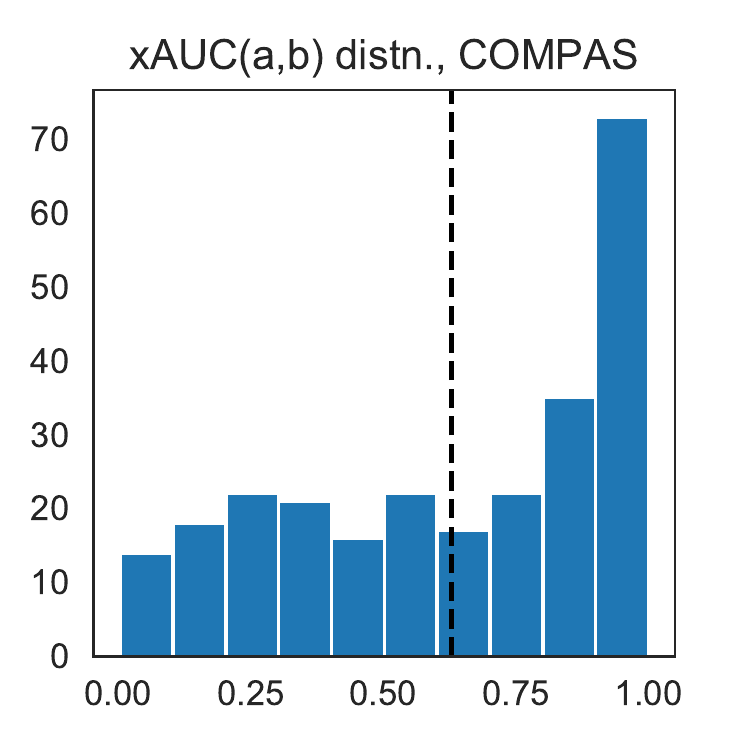}
    \caption{Prob. of individual \\
      Black non.rec. ranked\\
       below White rec. }
\end{subfigure}\begin{subfigure}[b]{0.25\textwidth}
    \includegraphics[width=\textwidth]{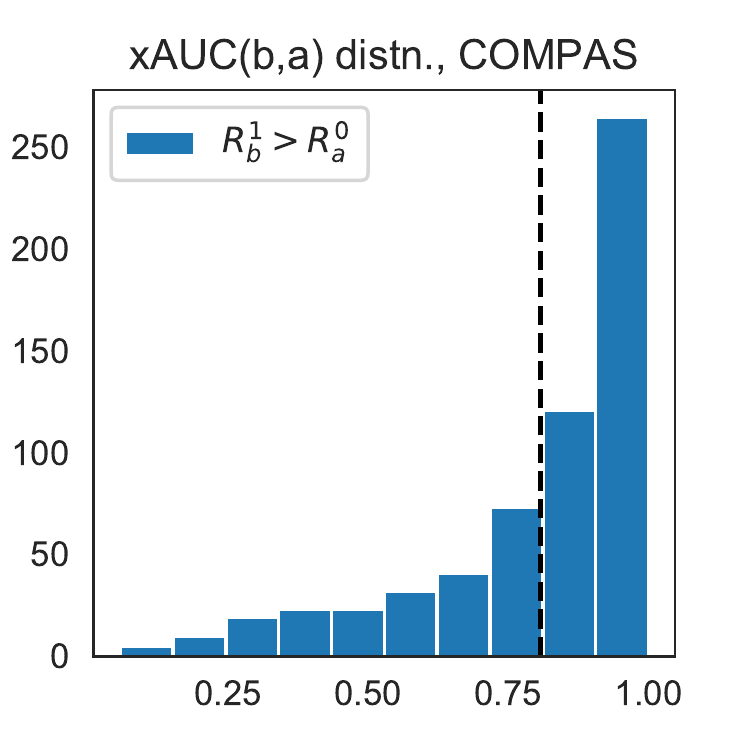}
\caption{Prob. of individual \\
White non.rec. ranked\\
below Black rec.  }
\end{subfigure}\begin{subfigure}[b]{0.25\textwidth}
\includegraphics[width=\textwidth]{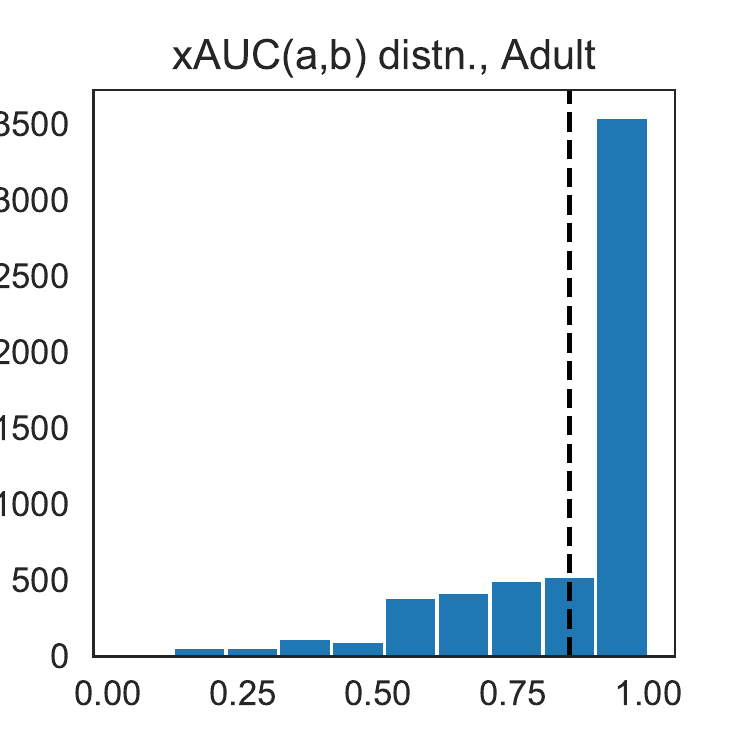}
\caption{Prob. of individual \\
  White low-inc. ranked \\
below Black high-inc.}
\end{subfigure}\begin{subfigure}[b]{0.25\textwidth}
\includegraphics[width=\textwidth]{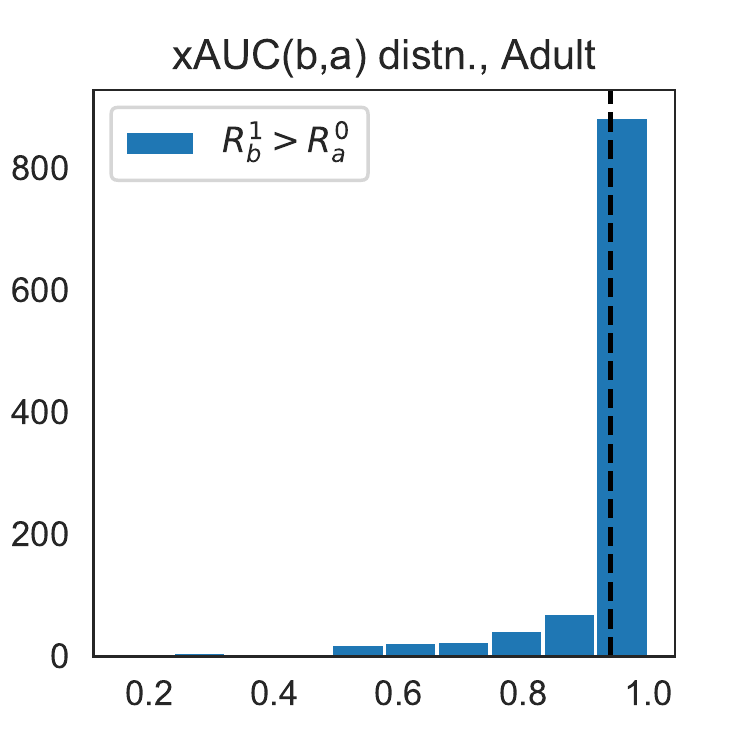}
\caption{Prob. of individual \\
  Black low-inc. ranked \\
below White high-inc.}
\end{subfigure}

  \caption{Distribution of conditional xAUCs for COMPAS and Adult datasets}\label{fig-conditional-xaucs}
\end{figure}

\paragraph{Diagnostics: Conditional xAUCs.} In addition to $\op{xAUC}$ and $\op{xROC}$ analysis, we consider the distribution of \textit{conditional} $\op{xAUC}$ ranking accuracies, $$\textstyle \op{xAUC}(a,b; R_b^0)\coloneqq\pr[R_a^1 > R_b^0 \mid R_b^0].$$
First note that $\op{xAUC}(a,b)=\Eb{\op{xAUC}(a,b; R_b^0)}$. Hence, this quantity is interpreted as the individual discrepancy faced by the $b$-innocent, the average of which over individuals gives the group disparity.
We illustrate
the histogram of $\op{xAUC}(a,b; R_b^0)$ probabilities over the individuals $R_b^0$ of the $A=b, Y= 0$ partition (and vice versa for $\op{xAUC}(b,a)$). For example, for COMPAS, we compute: how many white recidivators is this black non-recidivator correctly ranked above? $\op{xAUC}$ is the average of these conditional accuracies, but the variance of this distribution is also informative of the range of misranking risk and of effect on individuals.
We include these diagnostics in Fig.~\ref{fig-conditional-xaucs} and indicate the marginal $\op{xAUC}$ with a black dotted line. For example, the first pair of plots for COMPAS illustrates that while the $\op{xAUC}(b,a)$ distribution of misranking errors faced by black recidivators appears to have light tails, such that the model is more accurate at ranking white non-recidivators above black recidivators, there is extensive probability mass for the $\op{xAUC}(a,b)$ distribution, even at the tails: there are 15 white recidivators who are misranked above nearly all black non-recidivators. Assessing the distribution of conditional $\op{xAUC}$ can inform strategies for model improvement (such as those discussed in \cite{chen-js18}) by directing attention to extreme  error.

\paragraph{The question of adjustment.}
It is not immediately obvious that adjustment is an appropriate strategy for \textit{fair risk scores} for downstream decision support, considering well-studied impossibility results for fair classification \cite{kmr17,c16}. For the sake of comparison to the literature on adjustment for fair classification such as \citep{hardt2016equality}, we discuss post-processing risk scores in Appendix \ref{apx-sec-adjustment} and provide algorithms for equalizing xAUC. Adjustments from the fairness in ranking literature may not be suitable for risk scores:
 the method of \cite{sj18s} requires randomization over the space of rankers.

	\section{Conclusion} 
  We emphasize that $\op{xAUC}$ and $\op{xROC}$ analysis is intended to diagnose potential issues with a model, in particular when summarizing model performance without fixed thresholds.
  The $\op{xROC}$ curve and $\op{xAUC}$ metrics provide insight on the disparities that may occur with the implementation of a predictive risk score in broader, but practically relevant settings, beyond binary classification. 
\bibliography{xauc,beyondfairness,debiasedfairness.bib,sensitivity}

\begin{thebibliography}{41}
\providecommand{\natexlab}[1]{#1}
\providecommand{\url}[1]{\texttt{#1}}
\expandafter\ifx\csname urlstyle\endcsname\relax
  \providecommand{\doi}[1]{doi: #1}\else
  \providecommand{\doi}{doi: \begingroup \urlstyle{rm}\Url}\fi

\bibitem[Agarwal and Roth(2005)]{agarwal-roth-05}
S.~Agarwal and D.~Roth.
\newblock Learnability of bipartite ranking functions.
\newblock \emph{Proceedings of the 18th Annual Conference on Learning Theory,
  2005}, 2005.

\bibitem[Angwin et~al.(2016)Angwin, Larson, Mattu, and Kirchner]{propublica}
J.~Angwin, J.~Larson, S.~Mattu, and L.~Kirchner.
\newblock Machine bias.
\newblock Online., May 2016.

\bibitem[Barabas et~al.(2017)Barabas, Dinakar, Ito, Virza, and
  Zittrain]{bdivz17}
C.~Barabas, K.~Dinakar, J.~Ito, M.~Virza, and J.~Zittrain.
\newblock Interventions over predictions: Reframing the ethical debate for
  actuarial risk assessment.
\newblock \emph{Proceedings of Machine Learning Research}, 2017.

\bibitem[Barocas and Selbst(2014)]{bs14}
S.~Barocas and A.~Selbst.
\newblock Big data's disparate impact.
\newblock \emph{California Law Review}, 2014.

\bibitem[Barocas et~al.(2018)Barocas, Hardt, and
  Narayanan]{barocas-hardt-narayanan}
S.~Barocas, M.~Hardt, and A.~Narayanan.
\newblock \emph{Fairness and Machine Learning}.
\newblock fairmlbook.org, 2018.
\newblock \url{http://www.fairmlbook.org}.

\bibitem[Bonta and Andrews(2007)]{rnr-07}
J.~Bonta and D.~Andrews.
\newblock Risk-need-responsivity model for offender assessment and
  rehabilitation.
\newblock 2007.

\bibitem[Brier(1950)]{brier50}
G.~W. Brier.
\newblock Verification of forecasts expressed in terms of probability.
\newblock \emph{Monthly Weather Review}, 1950.

\bibitem[Celis et~al.(2018)Celis, Straszak, and Vishnoi]{celis17}
L.~E. Celis, D.~Straszak, and N.~K. Vishnoi.
\newblock Ranking with fairness constraints.
\newblock \emph{45th International Colloquium on Automata, Languages, and
  Programming (ICALP 2018)}, 2018.

\bibitem[Chan et~al.(2018)Chan, Escobar, and Zubizarreta]{chan-ez-18}
C.~Chan, G.~Escobar, and J.~Zubizarreta.
\newblock Use of predictive risk scores for early admission to the icu.
\newblock \emph{MSOM}, 2018.

\bibitem[Chen et~al.(2018)Chen, Johansson, and Sontag]{chen-js18}
I.~Chen, F.~Johansson, and D.~Sontag.
\newblock Why is my classifier discriminatory?
\newblock \emph{In Advances in Neural Information Processing Systems 31}, 2018.

\bibitem[Chojnacki et~al.(2017)Chojnacki, Dai, Farahi, Shi, Webb, Zhang,
  Abernethy, and Schwartz]{abernethy-flint-17}
A.~Chojnacki, C.~Dai, A.~Farahi, G.~Shi, J.~Webb, D.~T. Zhang, J.~Abernethy,
  and E.~Schwartz.
\newblock A data science approach to understanding residential water
  contamination in flint.
\newblock \emph{Proceedings of KDD 2017}, 2017.

\bibitem[Chouldechova(2016)]{c16}
A.~Chouldechova.
\newblock Fair prediction with disparate impact: A study of bias in recidivism
  prediction instruments.
\newblock In \emph{Proceedings of FATML}, 2016.

\bibitem[Chouldechova et~al.(2018)Chouldechova, Putnam-Hornstein,
  Benavides-Prado, Fialko, and Vaithianathan]{cpv18}
A.~Chouldechova, E.~Putnam-Hornstein, D.~Benavides-Prado, O.~Fialko, and
  R.~Vaithianathan.
\newblock A case study of algorithm-assisted decision making in child
  maltreatment hotline screening decisions.
\newblock \emph{Conference on Fairness, Accountability, and Transparency},
  2018.

\bibitem[Corbett-Davies and Goel(2018)]{cdg-18}
S.~Corbett-Davies and S.~Goel.
\newblock The measure and mismeasure of fairness: A critical review of fair
  machine learning.
\newblock \emph{ArXiv preprint}, 2018.

\bibitem[Cortes and Mohri(2003)]{cm03}
C.~Cortes and M.~Mohri.
\newblock Auc optimization vs. error rate minimization.
\newblock \emph{Proceedings of the 16th International Conference on Neural
  Information Processing Systems}, 2003.

\bibitem[DeLong et~al.(1988)DeLong, DeLong, and Clarke-Pearson]{delong88}
E.~DeLong, D.~DeLong, and D.~L. Clarke-Pearson.
\newblock Comparing the areas under two or more correlated receiver operating
  characteristic curves: A nonparametric approach.
\newblock \emph{Biometrics}, 1988.

\bibitem[Dheeru and Taniskidou(2017)]{UCI17}
D.~Dheeru and E.~K. Taniskidou.
\newblock Uci machine learning repository.
\newblock \emph{http://archive.ics.uci.edu/ml}, 2017.

\bibitem[Dieterich et~al.(2016)Dieterich, Mendoza, and Brennan]{dieterich16}
W.~Dieterich, C.~Mendoza, and T.~Brennan.
\newblock Compas risk scales: Demonstrating accuracy equity and predictive
  parity.
\newblock \emph{Technical Report}, 2016.

\bibitem[Freund et~al.(2003)Freund, Iyer, Schapire, and Singer]{fiss03}
Y.~Freund, R.~Iyer, R.~Schapire, and Y.~Singer.
\newblock An efficient boosting algorithm for combining preferences.
\newblock \emph{Journal of Machine Learning Research 4 (2003)}, 2003.

\bibitem[Friedler et~al.(2019)Friedler, Scheidegger, Venkatasubramanian,
  Choudhary, Hamilton, and Roth]{friedler19}
S.~Friedler, C.~Scheidegger, S.~Venkatasubramanian, S.~Choudhary, E.~P.
  Hamilton, and D.~Roth.
\newblock A comparative study of fairness-enhancing interventions in machine
  learning.
\newblock \emph{ACM Conference on Fairness, Accountability and Transparency
  (FAT*)}, 2019.

\bibitem[Fuster et~al.(2018)Fuster, Goldsmith-Pinkham, Ramadorai, and
  Walther]{pgp-fuster-18}
A.~Fuster, P.~Goldsmith-Pinkham, T.~Ramadorai, and A.~Walther.
\newblock Predictably unequal? the effects of machine learning on credit
  markets.
\newblock \emph{SSRN:3072038}, 2018.

\bibitem[Hand(2009)]{h09}
D.~Hand.
\newblock Measuring classifier performance: a coherent alternative to the area
  under the roc curve.
\newblock \emph{Machine Learning}, 2009.

\bibitem[Hanley and McNeil(1982)]{hm82}
J.~Hanley and B.~McNeil.
\newblock The meaning and use of the area under a receiver operating
  characteristic (roc) curve.
\newblock \emph{Radiology}, 1982.

\bibitem[Hardt et~al.(2016)Hardt, Price, Srebro, et~al.]{hardt2016equality}
M.~Hardt, E.~Price, N.~Srebro, et~al.
\newblock Equality of opportunity in supervised learning.
\newblock In \emph{Advances in Neural Information Processing Systems}, pages
  3315--3323, 2016.

\bibitem[Hebert-Johnson et~al.(2018)Hebert-Johnson, Kim, Reingold, and
  Rothblum]{hebert-johnson-multicalibration-18}
U.~Hebert-Johnson, M.~Kim, O.~Reingold, and G.~Rothblum.
\newblock Multicalibration: Calibration for the (computationally-identifiable)
  masses.
\newblock \emph{Proceedings of the 35th International Conference on Machine
  Learning, PMLR 80:1939-1948}, 2018.

\bibitem[Holstein et~al.(2019)Holstein, Vaughan, III, Dudík, and
  Wallach]{holstein19}
K.~Holstein, J.~W. Vaughan, H.~D. III, M.~Dudík, and H.~Wallach.
\newblock Improving fairness in machine learning systems: What do industry
  practitioners need?
\newblock \emph{2019 ACM CHI Conference on Human Factors in Computing Systems
  (CHI 2019)}, 2019.

\bibitem[Jones et~al.(2011)Jones, Shah, Bruce, and
  Stewart]{med-meaningful-use-2011}
J.~Jones, N.~Shah, C.~Bruce, and W.~F. Stewart.
\newblock Meaningful use in practice: Using patient-specific risk in an
  electronic health record for shared decision making.
\newblock \emph{American Journal of Preventive Medicine}, 2011.

\bibitem[Kleinberg et~al.(2017)Kleinberg, Mullainathan, and Raghavan]{kmr17}
J.~Kleinberg, S.~Mullainathan, and M.~Raghavan.
\newblock Inherent trade-offs in the fair determination of risk scores.
\newblock \emph{To appear in Proceedings of Innovations in Theoretical Computer
  Science (ITCS), 2017}, 2017.

\bibitem[Kontokosta and Hong(2018)]{kontokosta-hong-18}
C.~E. Kontokosta and B.~Hong.
\newblock Who calls for help? statistical evidence of disparities in
  citizen-government interactions using geo-spatial survey and 311 data from
  kansas city.
\newblock \emph{Bloomberg Data for Good Exchange Conference}, 2018.

\bibitem[Liu et~al.(2018)Liu, Simchowitz, and Hardt]{calib-lsh-18}
L.~Liu, M.~Simchowitz, and M.~Hardt.
\newblock Group calibration is a byproduct of unconstrained learning.
\newblock \emph{ArXiv preprint}, 2018.

\bibitem[Menon and Williamson(2016)]{menon-williamson18}
A.~Menon and R.~C. Williamson.
\newblock Bipartite ranking: a risk-theoretic perspective.
\newblock \emph{Journal of Machine Learning Research}, 2016.

\bibitem[Michael~Feldman(2015)]{feldman15}
J.~M. C. S. S.~V. Michael~Feldman, Sorelle~Friedler.
\newblock Certifying and removing disparate impact.
\newblock \emph{Proecedings of KDD 2015}, 2015.

\bibitem[Mohri et~al.(2012)Mohri, Rostamizadeh, and Talwalkar]{mohri-12}
M.~Mohri, A.~Rostamizadeh, and A.~Talwalkar.
\newblock \emph{Foundations of Machine Learning}.
\newblock 2012.

\bibitem[Narasimhan and Agarwal(2013)]{ranking-regret-na-13}
H.~Narasimhan and S.~Agarwal.
\newblock On the relationship between binary classification, bipartite ranking,
  and binary class probability estimation.
\newblock \emph{Proceedings of NIPS 2013}, 2013.

\bibitem[Rajkomar et~al.(2018)Rajkomar, Hardt, Howell, Corrado, and
  Chin]{healthequity-18}
A.~Rajkomar, M.~Hardt, M.~D. Howell, G.~Corrado, and M.~H. Chin.
\newblock Ensuring fairness in machine learning to advance health equity.
\newblock \emph{Annals of Internal Medicine}, 2018.

\bibitem[Reilly and Evans(2006)]{med-prediction-rules-16}
B.~Reilly and A.~Evans.
\newblock Translating clinical research into clinical practice: Impact of using
  prediction rules to make decisions.
\newblock \emph{Annals of Internal Medicine}, 2006.

\bibitem[Rudin et~al.(2010)Rudin, Passonneau, Radeva, Dutta, SteveIerome, and
  Isaac]{rudin-10-manholeprediction}
C.~Rudin, R.~J. Passonneau, A.~Radeva, H.~Dutta, SteveIerome, and D.~Isaac.
\newblock A process for predicting manhole events in manhattan.
\newblock \emph{Machine Learning}, 2010.

\bibitem[Singh and Joachims(2018)]{sj18s}
A.~Singh and T.~Joachims.
\newblock Fairness of exposure in rankings.
\newblock \emph{Proceedings of KDD 2018}, 2018.

\bibitem[Wilson et~al.(1987)Wilson, Castelli, and Kannel]{wilson1987coronary}
P.~W. Wilson, W.~P. Castelli, and W.~B. Kannel.
\newblock Coronary risk prediction in adults (the framingham heart study).
\newblock \emph{The American journal of cardiology}, 59\penalty0 (14):\penalty0
  G91--G94, 1987.

\bibitem[Yang and Stoyanovich(2017)]{ys17}
K.~Yang and J.~Stoyanovich.
\newblock Measuring fairness in ranked outputs.
\newblock \emph{Proceedings of SSDBM 17}, 2017.

\bibitem[Zafar et~al.(2017)Zafar, Valera, Rodriguez, and Gummadi]{zafar17}
M.~B. Zafar, I.~Valera, M.~G. Rodriguez, and K.~P. Gummadi.
\newblock Fairness beyond disparate treatment \& disparate impact: Learning
  classification without disparate mistreatment.
\newblock \emph{Proceedings of WWW 2017}, 2017.

\end{thebibliography}
\bibliographystyle{abbrvnat}
\onecolumn
\appendix

\section{Analysis}
\begin{proof}[Proof of probabilistic derivation of the $\op{xAUC}$]\label{analysis-auc}
	For the sake of completeness we include the probabilistic derivation of the $\op{xAUC}$, analogous to similar arguments for $\op{AUC}$ \citep{hm82,h09}.
	
	By a change of variables and observing that $\frac{d}{dv} G^{-1}(v) = -\frac{1}{G'(G^{-1}(v)} = \frac{1}{-f}$, if we consider the mapping between threshold $s$ that achieves TPR $v$, $s = G^{-1}(v)$, we can rewrite the AUC integrated over the space of \textit{scores} s as 
	
	$$ \int_{-\infty}^\infty G_1^a(s) f_0^b(s) ds $$
	
	Recalling the conditional score distributions $R_{1}^a = R \mid Y=1, A=a$ and $R_{0}^b = R \mid Y=0, A=b$, then the probabilistic interpretation of the AUC follows by observing 
	\begin{align*}
	&\int_{-\infty}^\infty G_1^a(s) f_0^b(s) ds = \int_0^1 \Pr[R> s \mid Y=1, A =a] \Pr[R = s\mid Y = 0] \; ds \\
	&= \int_0^1 \left(\int_0^1 \mathbb I(R_1^a>s) \Pr[R_1^a =t]  dt \right) \Pr[R_0^b = s] \; ds\\
	& = \int_0^1 \int_0^1 \mathbb{I}(R_1^a > R_0^b) f_1^a(t) f_0^b(s) \; ds \; dt = \mathbb E[ \mathbb{I}(R_1^a > R_0^b) ] = \Pr[ \mathbb{I}(R_1^a > R_0^b)]
	\end{align*}
	
\end{proof}

\begin{proof}[Proof of Proposition~\ref{prop-aucdecomp}]
	We show this for the decomposition $\Pr[R_1 > R_0] = \sum_{a' \in \mathcal{A}} \Pr[A=a'\mid Y=0] \Pr[R_1 > R_0^{a'}]$; the others follow by applying the same argument. 
	\begin{align*}
	& \sum_{a' \in \mathcal{A}} \Pr[A=a' \mid Y =1] \Pr[R_1^{a'} > R_0]  = \sum_{a' \in \mathcal{A}} \Pr[A=a'\mid Y=1] \int_r \Pr[R>r \mid A=a', Y=1 ] \Pr[R_0 =r ]dr\\
	& = \sum_{a' \in \mathcal{A}} \int_r Pr[R>r, A=a' \mid Y=1 ] \Pr[R_0 =r ]dr\\
	& =\int_r  \sum_{a' \in \mathcal{A}} Pr[R>r, A=a' \mid Y=1 ] \Pr[R_0 =r ]dr\\
	& =\int_r Pr[R>r \mid Y=1 ] \Pr[R_0 =r ]dr = \Pr[R_1 > R_0]
	\end{align*}
	
\end{proof}

\subsection{Example: same AUCs, different xAUCs}\label{apx-diff-xauc-same-aucs}
Again, for the sake of example, we assume normally distributed risk scores within each group and outcome condition and re-express the $\op{AUC}$ in terms of the cdf of the convolution of the score distributions. 
For $R_0^a \sim N(\mu_{a0}, \sigma^2_{a0}),R_1^b \sim N(\mu_{b1}, \sigma^2_{b1})$, (drawn independently, conditional on outcome $Y=1,Y=0$), the $\op{xAUC}$ is closed-form, $\textstyle \Pr[R_1^a > R_0^b] = \Phi\left( \frac{\mu_{b0}-\mu_{a1}}{\sqrt{\sigma^2_{a1} + \sigma^2_{b0} } } \right)$.
To further gain intuition, we consider settings where the score distributions have equivalent within-group $\op{xAUC}$ scores, and what parameters yield $\op{xAUC}$ disparities. 
	\begin{equation*}
	\textstyle \max \left\{ \abs{\Phi\left( \frac{\mu_{a0}-\mu_{b1}}{\sqrt{\sigma^2_{b1} + \sigma^2_{a0} } } \right) - \Phi\left( \frac{\mu_{b0}-\mu_{a1}}{\sqrt{\sigma^2_{b0} + \sigma^2_{a1} } } \right)} \colon \frac{\mu_{a0}-\mu_{a1}}{\sqrt{\sigma^2_{a1} + \sigma^2_{a0} } }=  \frac{\mu_{b0}-\mu_{b1}}{\sqrt{\sigma^2_{b1} + \sigma^2_{b0} } } \right\}
	\end{equation*}
	For the sake of concreteness we fix scalars for the parameters of group $a$. We then vary group $b$ parameters.
The constraint of equal AUCs corresponds to the level curve $\sigma^2_{a1} + \sigma^2_{a0}  = \sigma^2_{b1} + \sigma^2_{b0} $. Let $\sigma^2_{a1},\sigma^2_{a0}  = 0.25$, and $\mu_{a0}=0.25$, $\mu_{a1}=0.75$. We consider constraints $\mu \in [0,1], \sigma^2 \leq 0.5$ to approximate densities on $[0,1]$
Assume nontrivial classification performance corresponds with  $\mu_{1}> 0.5, \mu_1 < 0.5$ (the distribution is suitably peaked). 
	
	Then the remaining d.o.f. on the parameters are those for the $A=b$ group: 
	\begin{align*} 
	&\max \abs{\Phi\left( \frac{ 0.25-\mu_{b1}}{\sqrt{\sigma^2_{b1} + 0.25 } } \right) - \Phi\left( \frac{\mu_{b0}-0.75}{\sqrt{\sigma^2_{b0} + 0.25 } } \right)} \\
	&\text{ s.t. } 0.5({{\sigma^2_{b1} + \sigma^2_{b0}} })= ({\mu_{b0}-\mu_{b1} })^2 
	\end{align*}
If we fix variances, $\sigma_{b1}^2, \sigma_{b0}^2=0.5$, then this disparity depends only on the means, and we can maximize the disparity by letting $\mu_{b0}\to 0, \mu_{b1}\to 1$. Otherwise if we fix the mean disparity, again we achieve maximal disparity by $\sigma_{b1}\to 0$ and $\sigma_{b0} \to 0.5$ (or vice versa). 

\section{Additional Empirics} 

\subsection{Balanced xROC curves and score distributions} 
\begin{figure}[ht!]
	\begin{subfigure}[b]{0.5\textwidth}
		\includegraphics[width=0.5\textwidth]{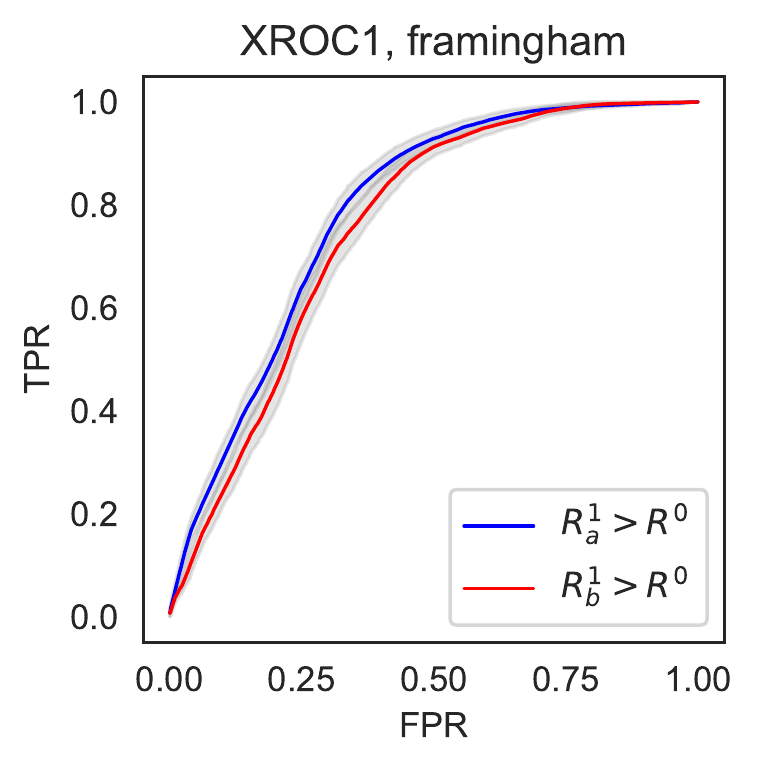}\includegraphics[width=0.5\textwidth]{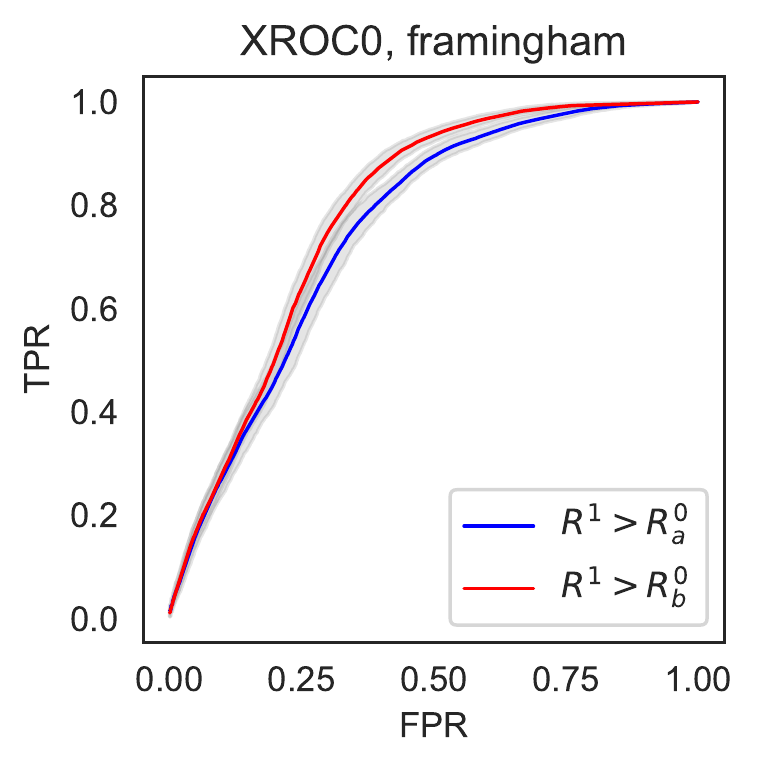}%
		\caption{Balanced xROC curves for Framingham ($a,b$ for female, male)}\label{fig:balanced-xroc-frm}
	\end{subfigure}\begin{subfigure}[b]{0.5\textwidth}
		\includegraphics[width=\textwidth]{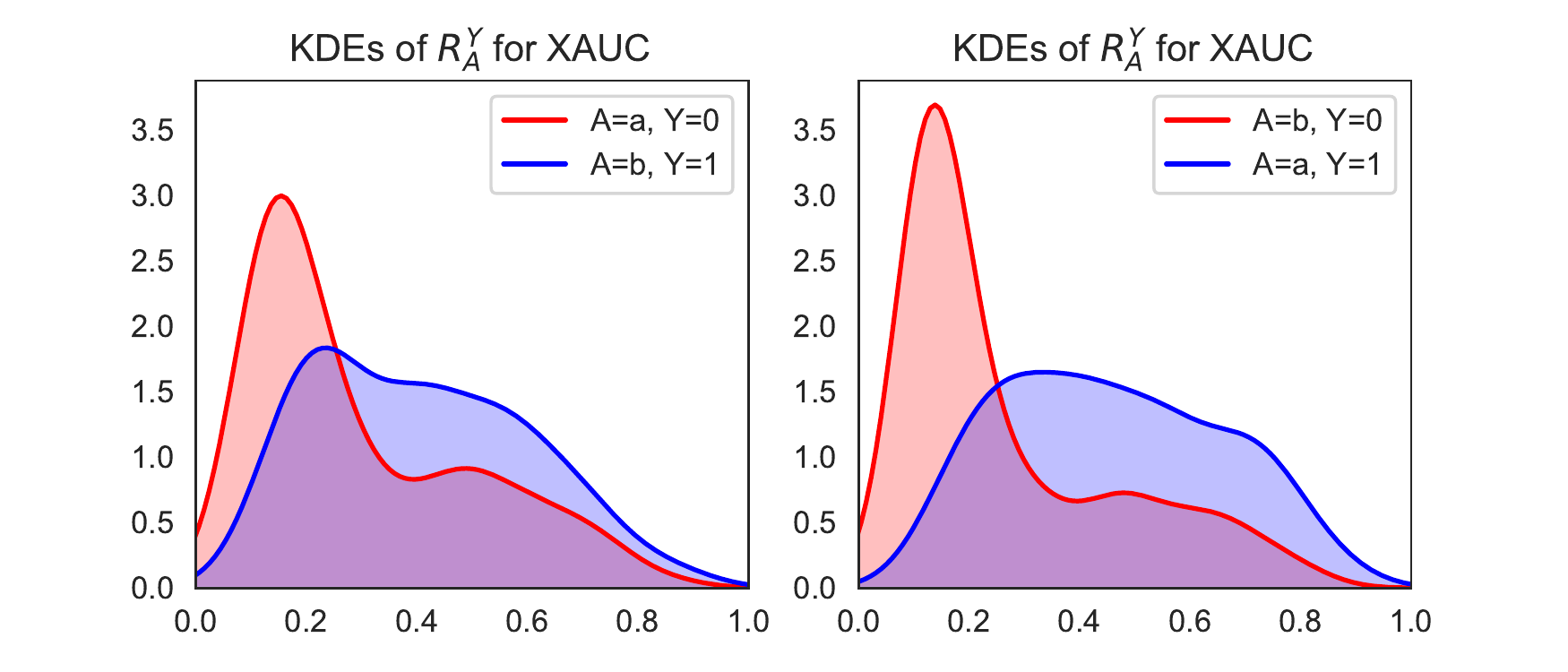}
		\caption{KDEs of outcome- and group-conditional score distributions}\label{fig:balanced-kdes-frm}
	\end{subfigure}
	\begin{subfigure}[b]{0.5\textwidth}
		\includegraphics[width=0.5\textwidth]{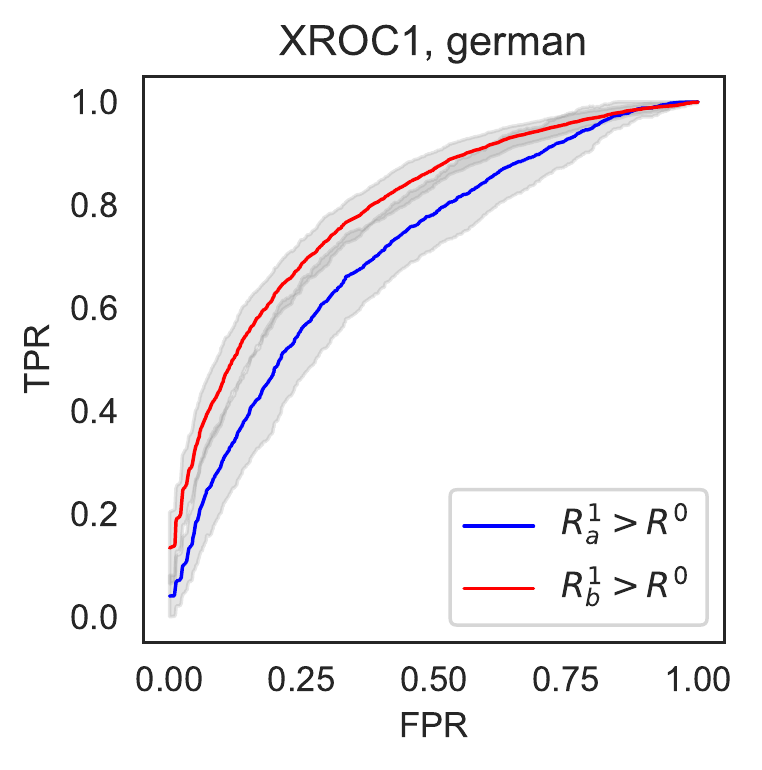}\includegraphics[width=0.5\textwidth]{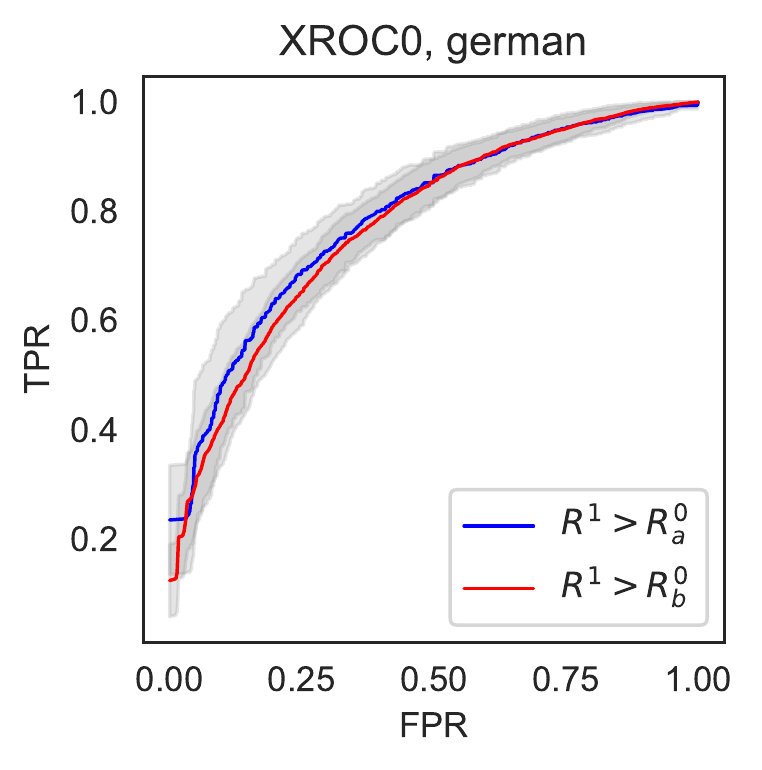}%
		\caption{Balanced xROC curves for German ($a,b$ for black, white)}\label{fig:balanced-xroc-gmn}
	\end{subfigure}\begin{subfigure}[b]{0.5\textwidth}
		\includegraphics[width=\textwidth]{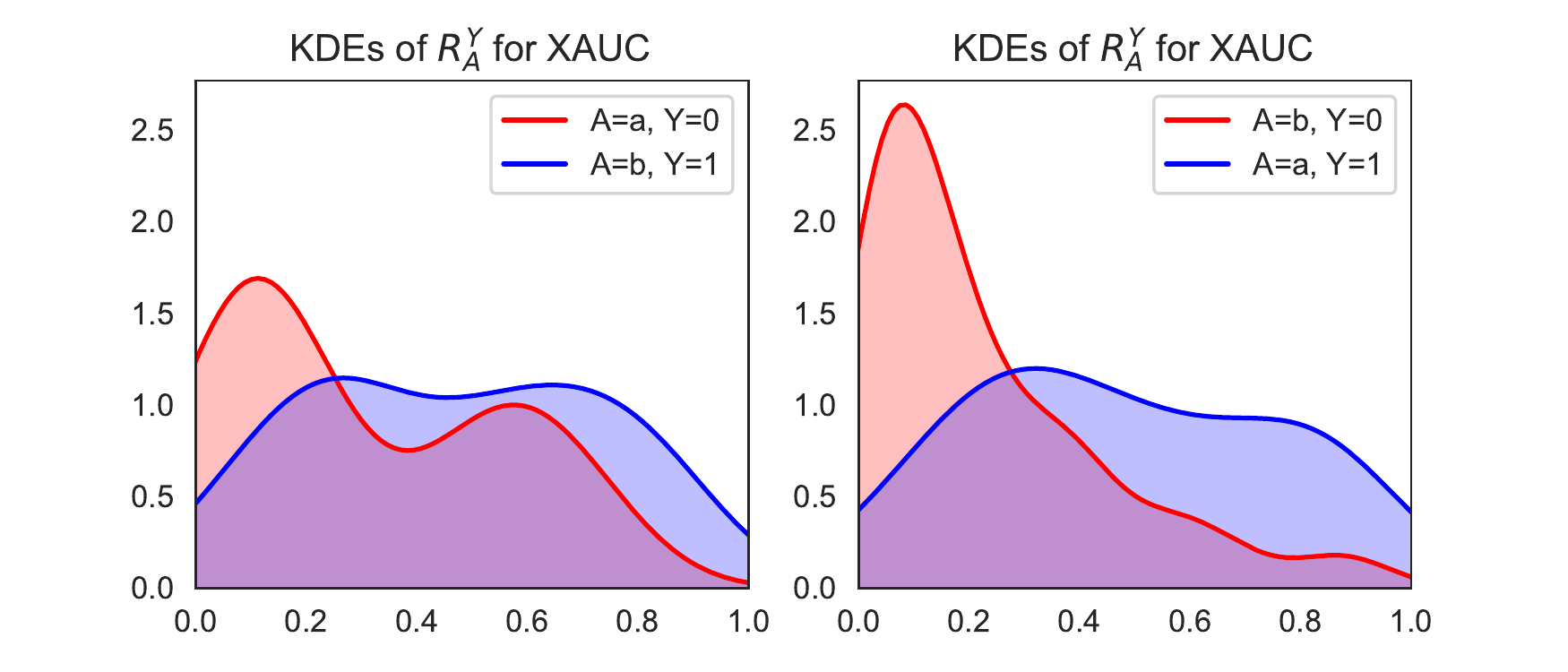}
		\caption{KDEs of outcome- and group-conditional score distributions}\label{fig:balanced-kdes-gmn}
	\end{subfigure}
	
	\begin{subfigure}[b]{0.5\textwidth}
		\includegraphics[width=0.5\textwidth]{figs/adultXROC1-LRbootstrapped.pdf}\includegraphics[width=0.5\textwidth]{figs/adultXROC0-LRbootstrapped.pdf}%
		\caption{Balanced xROC curves for Adult ($a,b$ for black, white)}\label{fig:balanced-xroc-adt}
	\end{subfigure}\begin{subfigure}[b]{0.5\textwidth}
		\includegraphics[width=\textwidth]{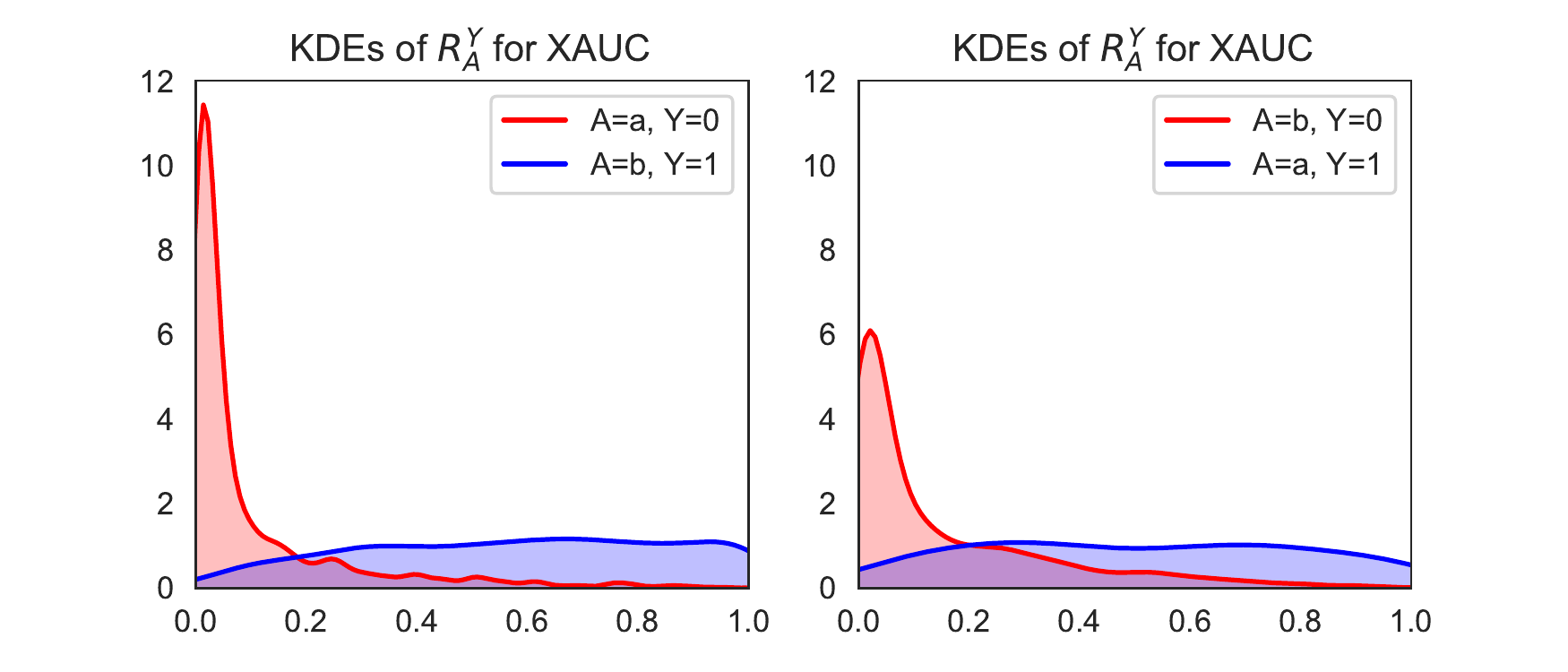}
		\caption{KDEs of outcome- and group-conditional score distributions}\label{fig:balanced-kdes-adt}
	\end{subfigure}
	\caption{Comparison of balanced xROC curves for Framingham, German, and Adult datasets}\label{fig:apx-kdes}
\end{figure}
We compute the similar xROC decomposition for all datasets. For Framingham and German, the balanced XROC decompositions do not suggest unequal ranking disparity burden on the innocent or guilty class in particular. For the Adult dataset, the $\op{xAUC}_0$ disparity is higher than the $\op{xAUC}_1$ disparity, suggesting that the misranking disparity is incurred by low-income whites who are spuriously recognized as high-income (and therefore might be disproportionately extended economic opportunity via \eg favorable loan terms). The Framingham data is obtained from http://biostat.mc.vanderbilt.edu/DataSets.

Framingham, German, and Adult have more peaked distributions (more certain) for the $Y=0$ class with more uniform distributions for the $Y=1$ class; the adult income dataset exhibits the greatest contrast in variance between the $Y=0$ and $Y=1$ class. 
\section{Standard errors for reported metrics}

The standard errors for Table \ref{table-metrics} are given in Table \ref{table-stderrs}.

\begin{table*}[ht!]
	\centering
	\caption{Standard errors of the metrics (AUC, xAUC, Brier scores for calibration) for different datasets.}
	\label{table-stderrs}
	\begin{tabular}{lllllllllll}\toprule
		&		&\multicolumn{2}{c}{COMPAS}&\multicolumn{2}{c}{Framingham}&
		\multicolumn{2}{c}{German}&\multicolumn{2}{c}{Adult}\\
		&$A=$	& $a$ & $b$& $a$ & $b$& $a$ & $b$& $a$ & $b$\\ 
 		\midrule \multirow{2}{*}{\rotatebox[origin=c]{90}{Log Reg.}}&AUC & $0.011$ & $0.018$ & $0.016$ & $0.014$ & $0.049$ & $0.029$ & $0.007$ & $0.004$\\
		&Brier & $0.004$ & $0.006$ & $0.007$ & $0.006$ & $0.023$ & $0.012$ & $0.004$ & $0.002$\\
		&$\op{XAUC}$ & $0.023$ & $0.018$ & $0.013$ & $0.02$ & $0.048$ & $0.031$ & $0.01$ & $0.004$\\
		&$\op{XAUC}^1$ & $0.012$ & $0.015$ & $0.012$ & $0.014$ & $0.044$ & $0.024$ & $0.009$ & $0.003$\\
		&$\op{XAUC}^0$ & $0.012$ & $0.019$ & $0.015$ & $0.012$ & $0.032$ & $0.029$ & $0.004$ & $0.004$\\
		\midrule \multirow{2}{*}{\rotatebox[origin=c]{90}{RankBoost cal.}}&AUC & $0.011$ & $0.014$ & $0.015$ & $0.013$ & $0.045$ & $0.027$ & $0.008$ & $0.003$\\
		&Brier & $0.004$ & $0.005$ & $0.005$ & $0.004$ & $0.022$ & $0.009$ & $0.003$ & $0.002$\\
		&$\op{XAUC}$ & $0.025$ & $0.016$ & $0.012$ & $0.017$ & $0.044$ & $0.031$ & $0.01$ & $0.004$\\
		&$\op{XAUC}^1$ & $0.012$ & $0.013$ & $0.012$ & $0.013$ & $0.041$ & $0.024$ & $0.009$ & $0.003$\\
		&$\op{XAUC}^0$ & $0.011$ & $0.019$ & $0.014$ & $0.01$ & $0.03$ & $0.026$ & $0.004$ & $0.003$\\[0.5em]
		\bottomrule
	\end{tabular}
\end{table*}

\section{Reproducibility checklist} 
\begin{itemize}
	\item Data preprocessing and exclusion: We use the preprocessed datasets from the repository of \cite{friedler19} for COMPAS, German, and Adult, and all of the available data from the Framingham study. 
	\item Train/validation/test: We train models on a 70\% data split and evaluate $\op{xAUC}, \op{AUC}$ and $\op{ROC}, \op{xROC}$ on a 30\% out of sample split.
	\item Hyper-parameters: we use sklearn defaults for the assessed methods. 
	\item Evaluation runs: 50. 
	\item Computing infrastructure: MacBook Pro, 16gb RAM.
	\item Further discussion on exact evaluation approach in Sec.~\ref{sec-evaluation}
\end{itemize}

\section{xAUC postprocessing adjustment}

\begin{figure*}[ht!]
	\begin{subfigure}[b]{0.25\textwidth}%
		\includegraphics[width=\textwidth]{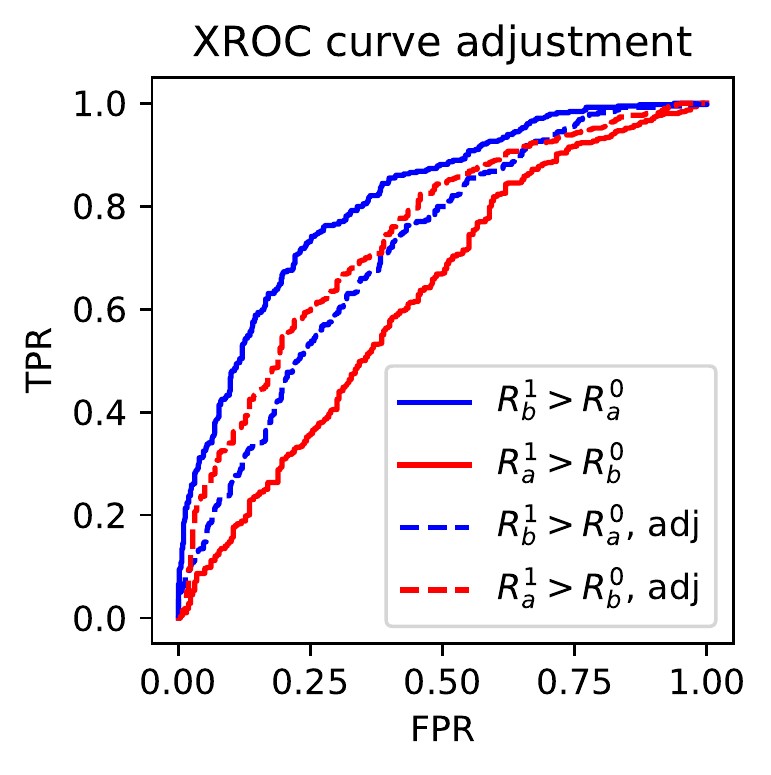}%
		\subcaption{COMPAS}%
	\end{subfigure}\begin{subfigure}[b]{0.25\textwidth}%
		\includegraphics[width=\textwidth]{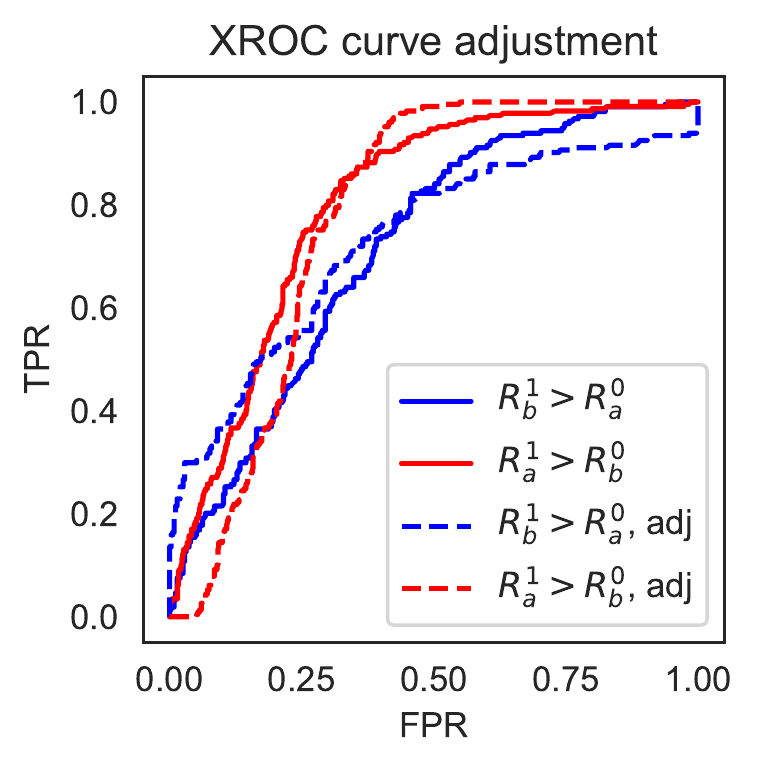}%
		\subcaption{ Framingham}
	\end{subfigure}\begin{subfigure}[b]{0.25\textwidth}%
		\includegraphics[width=\textwidth]{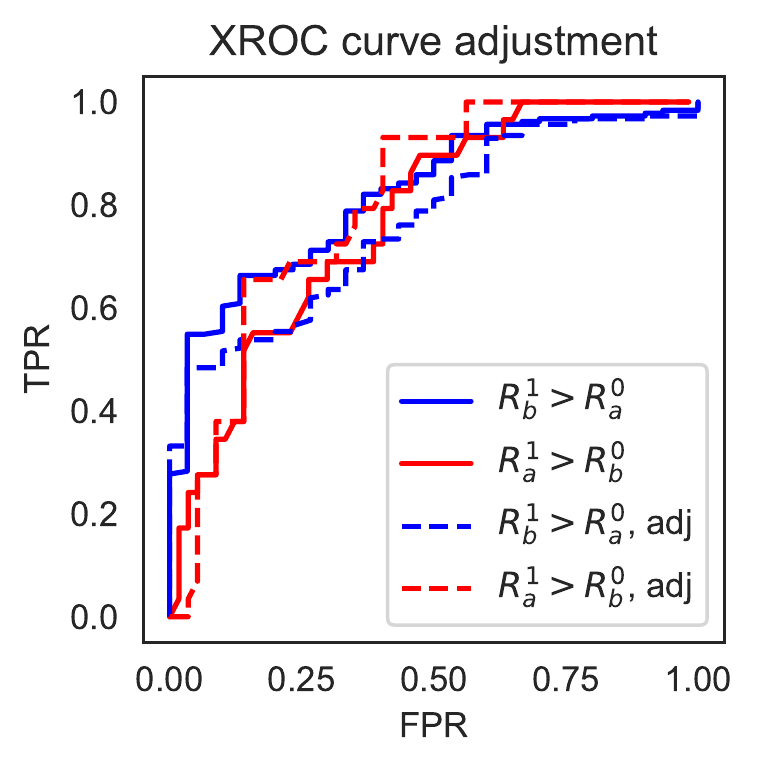}%
		\subcaption{German}
	\end{subfigure}\begin{subfigure}[b]{0.25\textwidth}%
		\includegraphics[width=\textwidth]{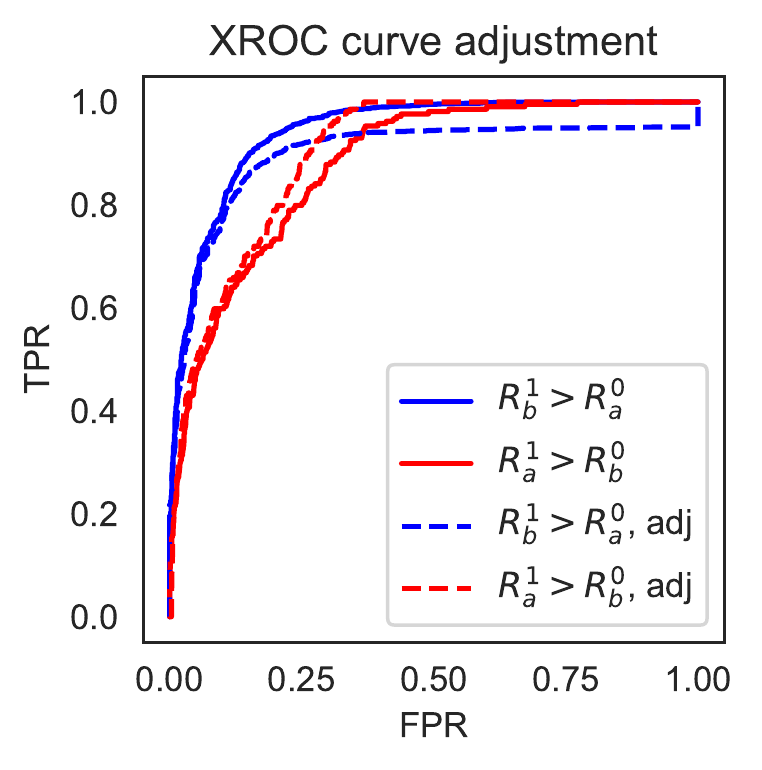}%
		\subcaption{Adult}
	\end{subfigure}
	\caption{XROC curves, before and after adjustment} 
	\label{XROC-adjustment}
\end{figure*}
\begin{figure*}[ht!]
	\begin{subfigure}[b]{0.25\textwidth}%
		\includegraphics[width=\textwidth]{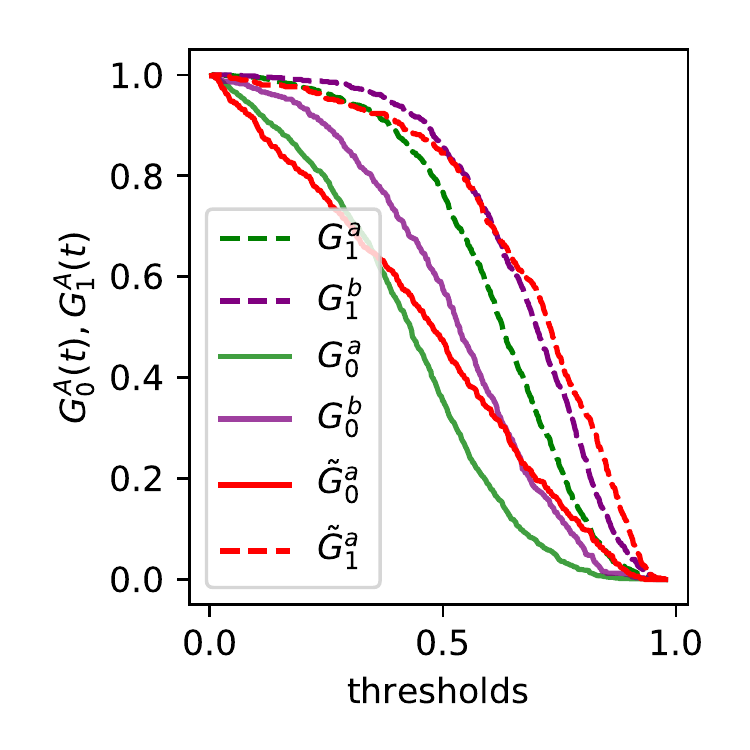}%
		\subcaption{COMPAS }%
	\end{subfigure}\begin{subfigure}[b]{0.25\textwidth}%
		\includegraphics[width=\textwidth]{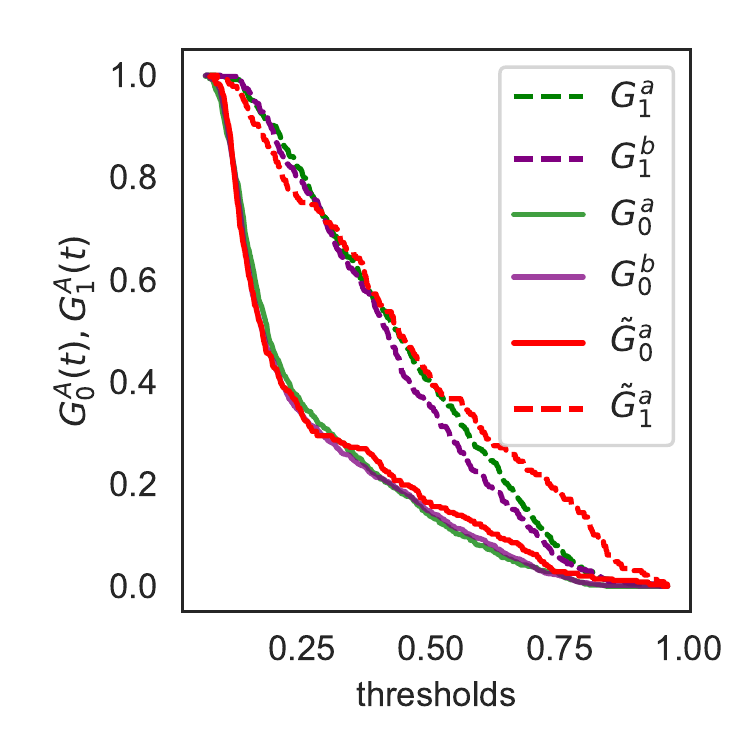}%
		\subcaption{ Framingham}
	\end{subfigure}\begin{subfigure}[b]{0.25\textwidth}%
		\includegraphics[width=\textwidth]{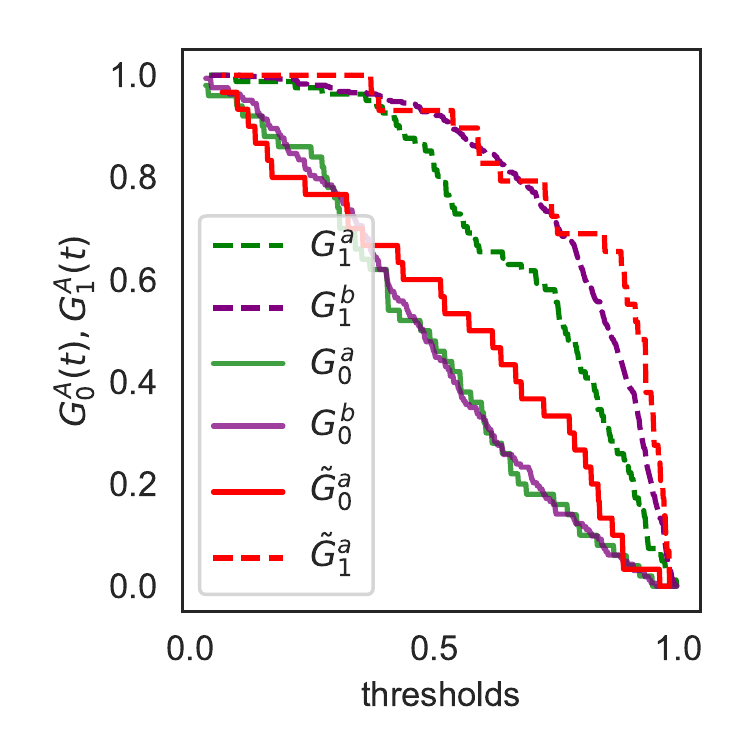}%
		\subcaption{German}
	\end{subfigure}\begin{subfigure}[b]{0.25\textwidth}%
		\includegraphics[width=\textwidth]{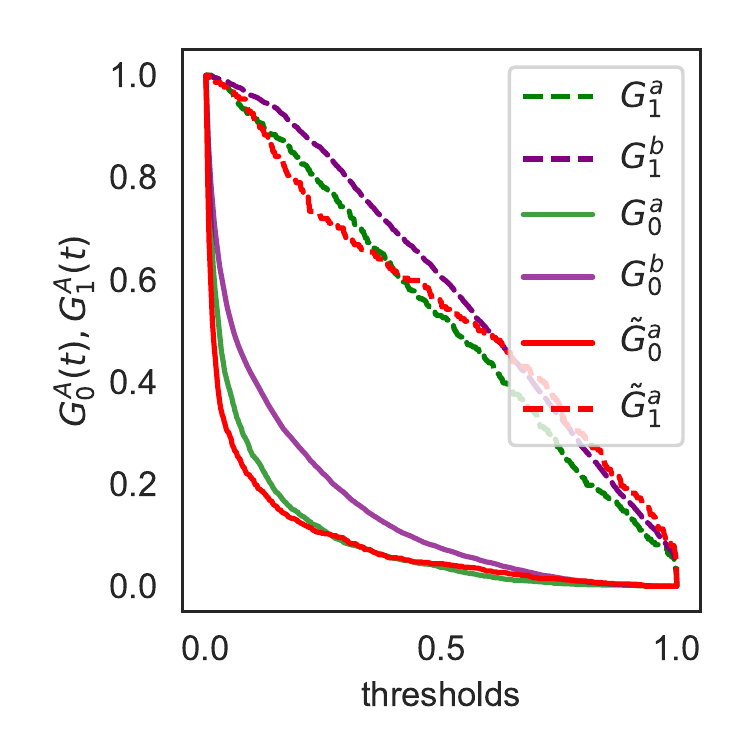}%
		\subcaption{Adult}
	\end{subfigure}
	\caption{TPR and FPR curves over thresholds $(G_y^a)$, and adjusted curves for group $A=a$ $(\tilde G_y^a)$}\label{fig-tpr-thresholds}
\end{figure*}

\subsection{Adjusting Scores for Equal xAUC}\label{apx-sec-adjustment}
We study the possibility of post-processing adjustments of a predicted risk score that yield equal xAUC across groups, noting that the exact nature of the problem domain may pose strong barriers to the implementability or individual fairness properties of post-processing adjustment. The results are intended to illustrate the distortionary extent that would be required to achieve equal $\op{xAUC}$ by preprocessing.

Without loss of generality, we consider transformations $h: \mathbb{R} \mapsto \mathbb{R} $ on group $b$. When $h$ is monotonic, the within-group AUC is preserved.
\begin{align*}
&\Pr[h(R_1^b) - R_0^a > 0] = \Pr[R_1^a - h(R_0^b) > 0 ]\\
&=\int G_1^b (h^{-1}((G_0^a)^{-1}(s))) ds = \int G_1^a (h((G_0^b)^{-1}(s))) ds 
\end{align*} 
Although solving analytically for the fixed point is difficult, empirically, we can simply optimize the $\op{xAUC}$ disparity over parametrized classes of monotonic transformations $h$, such as the logistic transformation $h(\alpha, \beta)=\frac{1}{1 + \exp(-(\alpha x +\beta))}$. We can further restrict the strength of transformation by restricting the range of parameters. 

In Fig.~\ref{XROC-adjustment} we plot the unadjusted and adjusted xROC curves (dashed) resulting from a transformation which equalizes the $\op{xAUC}$; we transform group $A=a$, the disadvantaged group. We optimize the empirical $\op{xAUC}$ disparity over the space of parameters $\alpha \in [0, 5]$, fixing the offset $b=-2$. In Fig.~\ref{fig-tpr-thresholds}, we plot the complementary cdfs $G_y^a$ corresponding to evaluating TPRs and FPRs over thresholds, as well as for the adjusted score (red). In table $\ref{table-adjustment}$, we show the optimal parameters achieving the lowest $\op{xAUC}$ disparity, which occurs with relatively little impact on the population $\op{AUC}$, although it reduces the $\op{xAUC}(b,a)$ of the advantaged group.

\begin{table}[]\centering
	\caption{Metrics before and after $\op{xAUC}$ parametric adjustment}\label{table-adjustment}
	\begin{tabular}{lllll}
		& COMPAS & Fram. & German & Adult \\\midrule
		$\op{AUC}$ (original) & 0.743  & 0.771      & 0.798  & 0.905 \\
		$\op{AUC}$ (adjusted) & 0.730  & 0.772      & 0.779  & 0.902 \\
		$\alpha^*$            & 4.70   & 3.20       & 4.71   & 4.43  \\
		$\op{xAUC}(a,b)$      & 0.724  & 0.761      & 0.753  & 0.895 \\
		$\op{xAUC}(b,a)$      & 0.716  & 0.758      & 0.760  & 0.898
	\end{tabular}
\end{table}
\subsection{Fair classification post-processing and the $\op{xAUC}$ disparity} 
One might consider applying the post-processing adjustment of \citet{hardt2016equality}, implementing the group-specific thresholds as group-specific shifts to the score distribution. Note that an equalized odds adjustment would equalize the TPR/FPR behavior for every threshold; since equalized odds might require randomization between two thresholds, there is no monotonic transform that equalizes the xROC curves for every thresholds. 

We instead consider the reduction in $\op{xAUC}$ disparity from applying the ``equality of opportunity'' adjustment that only equalizes TPR. For any specified true positive rate $\rho$, consider group-specific thresholds $\theta_a, \theta_b$ achieving $\rho$. These thresholds satisfy that $ G_1^a(\theta_a) =G_1^b(\theta_b) $. Then $\theta_b = (G_1^{b})^{-1}(G_1^a(\theta_a))$. The score transformation on $R$ that achieves equal TPRs is:
$$ h({r},A) = \begin{cases}
r & \text{ if } A=a \\
(G_1^{a})^{-1}(G_1^b (r))&  \text{ if } A=b
\end{cases} $$

\begin{proposition}
	The corresponding xAUC under an equality of opportunity adjustment, where $\tilde{R}_{eqop}=h(R)$, is: 
	\begin{align*} \Delta{\op{xAUC}}(\tilde{R}_{eqop}) = \op{AUC}^b - \op{AUC}^a
	\end{align*}
\end{proposition}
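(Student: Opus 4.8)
The plan is to track how the equality-of-opportunity map $h(\cdot,A)$ reshapes the relevant conditional score distributions and then recognize each adjusted $\op{xAUC}$ term as a within-group $\op{AUC}$. Write $\phi = (G_1^a)^{-1}\circ G_1^b$ for the transformation applied to group $b$ (group $a$ is left untouched). Note that $\phi$ is monotone increasing, being the composition of the decreasing map $G_1^b$ with the decreasing inverse $(G_1^a)^{-1}$. The whole argument rests on the probability integral transform: assuming the positive-class complementary CDFs are continuous and strictly decreasing (so that inverses exist), if $R_1^b$ has complementary CDF $G_1^b$ then $G_1^b(R_1^b)\sim\mathrm{Uniform}[0,1]$, and applying $(G_1^a)^{-1}$ produces a variable with complementary CDF $G_1^a$; that is, $\phi(R_1^b)\stackrel{d}{=} R_1^a$. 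This is precisely the statement that $h$ equalizes the positive-class (TPR) distributions across groups.

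First I would compute the adjusted $\op{xAUC}(b,a)$. Since $h$ acts as $\phi$ on all of group $b$'s scores and as the identity on group $a$'s, the transformed variables are $\tilde R_1^b = \phi(R_1^b)\stackrel{d}{=} R_1^a$ and $\tilde R_0^a = R_0^a$, and they remain independent. Hence
\[
\Pr[\tilde R_1^b > \tilde R_0^a] = \Pr[R_1^a > R_0^a] = \op{AUC}^a.
\]

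Next I would compute the adjusted $\op{xAUC}(a,b)$, which is the more delicate step because $h$ is calibrated to the positive-class CDFs yet here it acts on the negative-class score $R_0^b$. The transformed variables are $\tilde R_1^a = R_1^a$ and $\tilde R_0^b = \phi(R_0^b)$. I would use the monotonicity of $\phi$ to move the transformation off the negative term and onto the positive term: since $\phi$ is increasing, $R_1^a > \phi(R_0^b)$ if and only if $\phi^{-1}(R_1^a) > R_0^b$, where $\phi^{-1} = (G_1^b)^{-1}\circ G_1^a$. Applying the same probability-integral-transform identity in reverse gives $\phi^{-1}(R_1^a)\stackrel{d}{=} R_1^b$, so
\[
\Pr[\tilde R_1^a > \tilde R_0^b] = \Pr[\phi^{-1}(R_1^a) > R_0^b] = \Pr[R_1^b > R_0^b] = \op{AUC}^b.
\]
Subtracting the two displays gives $\Delta\op{xAUC}(\tilde R_{eqop}) = \op{AUC}^b - \op{AUC}^a$, as claimed.

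The main obstacle is this second computation: one must resist concluding that equalizing TPR also makes the negative-class distributions agree (it does not, since $\phi$ is tuned to $G_1$, not $G_0$), and instead exploit the monotonicity of $\phi$ to reroute the transformation onto $R_1^a$, where the integral-transform identity applies cleanly. The only regularity required is continuity and strict monotonicity of $G_1^a$ and $G_1^b$ to guarantee the inverses and the distributional identities; atoms or ties would call for the usual generalized-inverse bookkeeping but would not alter the conclusion.
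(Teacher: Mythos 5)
Your proof is correct, and it is in substance the probabilistic rendering of the same computation the paper performs with integrals. The paper plugs the adjusted complementary CDFs into the area representation $\op{xAUC}(a,b)=\int_0^1 G_1^a\bigl((G_0^b)^{-1}(s)\bigr)\,ds$ and simplifies via the cancellations $G_1^a\circ(G_1^a)^{-1}=\mathrm{id}$ and $G_1^b\circ(G_1^b)^{-1}=\mathrm{id}$, which collapse the two cross terms to $\op{AUC}^b$ and $\op{AUC}^a$ respectively. Your two steps are exactly these cancellations in random-variable language: the identity $\phi(R_1^b)\stackrel{d}{=}R_1^a$ (probability integral transform) is the second cancellation, and the monotone-inversion step $\Pr[R_1^a>\phi(R_0^b)]=\Pr[\phi^{-1}(R_1^a)>R_0^b]$ followed by $\phi^{-1}(R_1^a)\stackrel{d}{=}R_1^b$ reproduces the first. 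What your version buys: it makes explicit the regularity the paper leaves implicit (continuity and strict monotonicity of $G_1^a,G_1^b$ so the inverses exist, plus preservation of independence under the transformation), and it surfaces the conceptual content --- the equality-of-opportunity adjustment literally transports the group-$b$ positive-class score distribution onto the group-$a$ one while leaving the negative classes untouched, which is why the residual disparity is exactly the within-group AUC gap. What the paper's version buys is brevity: it stays entirely within the xROC-curve calculus, so both cross terms reduce in one line of algebra without any case analysis about which side of the inequality the transformation sits on. Both arguments are valid; neither has a gap.
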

\begin{proof}\begin{align*}\Delta{\op{xAUC}}(\tilde{R}_{eqop})  =\int G_1^a \left((G_1^{a})^{-1}(G_1^b ((G_0^b)^{-1}(s)))\right) ds \\ - \int G_1^b \left((G_1^{b})^{-1}(G_1^a ((G_0^a)^{-1}(s))) \right) ds   \\\
	= \int (G_1^b ((G_0^b)^{-1}(s))) ds - \int G_1^a ((G_0^a)^{-1}(s))  ds  \\
	= \op{AUC}^b - \op{AUC}^a
	\end{align*}\end{proof}

\end{document}